\documentclass[letterpaper, 10 pt, conference]{ieeeconf}  % Comment this line out if you need a4paper

\IEEEoverridecommandlockouts           

\usepackage{cite}
\usepackage{amsmath,amssymb,amsfonts}
\usepackage{graphicx}
\usepackage{textcomp}
\def\BibTeX{{\rm B\kern-.05em{\sc i\kern-.025em b}\kern-.08em
    T\kern-.1667em\lower.7ex\hbox{E}\kern-.125emX}}
\usepackage[utf8]{inputenc}
\usepackage{mathrsfs}
\usepackage{xcolor}
\usepackage{mathtools}
\usepackage{enumerate}
\usepackage{booktabs}
\usepackage{romannum}
\usepackage{float}
\usepackage{algorithm}
\usepackage{algpseudocode}
\usepackage{multirow}
\usepackage{fancyvrb}
\usepackage{url}

\usepackage{setspace}
\renewcommand{\baselinestretch}{0.96}

\usepackage{amsthm}
\newtheorem{theorem}{Theorem}
\newtheorem{lm}{Lemma}

\newtheorem{definition}{Definition}

\newtheorem{rem}{Remark}
\newtheorem{prop}{Proposition}
\usepackage[normalem]{ulem}

\makeatletter
\renewcommand*\env@matrix[1][\arraystretch]{%
  \edef\arraystretch{#1}%
  \hskip -\arraycolsep
  \let\@ifnextchar\new@ifnextchar
  \array{*\c@MaxMatrixCols c}}

\DeclareUnicodeCharacter{2212}{-}
\title{\LARGE \bf
Game-Theory-Assisted Reinforcement Learning for Border Defense: Early Termination based on Analytical Solutions
}

\author{Goutam Das$^{1}$, Michael Dorothy$^{2}$, Kyle Volle$^{3}$, and Daigo Shishika$^{4}$% 
\thanks{We gratefully acknowledge the support of  ARL grant ARL DCIST CRA
W911NF-17-2-0181 and AFRL grant FA8651-23-1-0012.
The views expressed in this paper are those of the authors and do not reflect the official policy or position of the United States Government, Department of Defense, or its components.}% <-this % stops a space
\thanks{$^{1}$Goutam Das, Postdoctoral Researcher, School of Aeronautics and Astronautics, Purdue University, 610 Purdue Mall, West Lafayette, IN 47907, USA
        {\tt\small das347@purdue.edu}}%
\thanks{$^{2}$Michael Dorothy. Army Research Directorate, DEVCOM Army Research Laboratory, APG, MD 21005, USA
        {\tt\small michael.r.dorothy.civ@army.mil}}%
\thanks{$^{3}$Kyle Volle, is with the Munitions Directorate, Air Force Research Laboratory, Eglin AFB, FL 32542, USA {\tt\small kyle@us.af.mil.}}%
\thanks{$^{4}$Daigo Shishika, Assistant Professor, Department of Mechanical Engineering, George Mason University,
        4400 University Dr, Fairfax, VA 22030, USA
        {\tt\small dshishik@gmu.edu}}%
}

\begin{document}
\maketitle
\thispagestyle{empty}
\pagestyle{empty}

%%%%%%%%%%%%%%%%%%%%%%%%%%%%%%%%%%%%%%%%%%%%%%%%%%%%%%%%%%%%%%%%%%%%%%%%%%%%%%%%

\begin{abstract}
Game theory provides the gold standard for analyzing adversarial engagements, offering strong optimality guarantees. However, these guarantees often become brittle when assumptions such as perfect information are violated. Reinforcement learning (RL), by contrast, is adaptive but can be sample-inefficient in large, complex domains. This paper introduces a hybrid approach that leverages game-theoretic insights to improve RL training efficiency. We study a border defense game with limited perceptual range, where defender performance depends on both search and pursuit strategies, making classical differential game solutions inapplicable. Our method employs the Apollonius Circle (AC) to compute equilibrium in the post-detection phase, enabling early termination of RL episodes without learning pursuit dynamics. This allows RL to concentrate on learning search strategies while guaranteeing optimal continuation after detection. Across single- and multi-defender settings, this early termination method yields 10–20\% higher rewards, faster convergence, and more efficient search trajectories. 
% For heterogeneous teams with specialized roles, it also enhances coordination. 
Extensive experiments validate these findings and demonstrate the overall effectiveness of our approach.
\end{abstract}

\section{Introduction}
\noindent
Differential games model strategic interactions between competing agents in adversarial scenarios, initially formalized by Isaacs in his seminal work \cite{Isaacs1965}. 
% PEGs have applications ranging from missile guidance and air combat to search-and-rescue operations and robotic surveillance \cite{weintraub2020introduction}. 
Border defense games~\cite{VonMoll2020BD}--also known as target defense~\cite{Garcia2017Active,fu2020guarding} or perimeter defense games~\cite{shishika2019perimeter}--represent a specialized variant where a team of defenders must protect a target region or boundary from infiltrating attackers \cite{weintraub2020introduction,Shishika2020}. 
% In these scenarios, the intruder (attacker) team tries to score by reaching the target region while a team of defenders try to minimize the score by intercepting those intruders \cite{fu2020guarding}.

Analytical approaches yield exact solutions to these problems under simplifying assumptions on the dynamics, environments, and information structure.
% have yielded exact solutions for specific configurations. 
Isaacs established the relevance of Apollonius Circles (AC) for defining \emph{dominance regions} in pursuit-evasion under simple motion, which was later formalized in \cite{dorothy2024one}. 
Recent decomposition methods enable polynomial-time assignment algorithms for perimeter defense \cite{Shishika2018, Garcia2021Multiple, bajaj2022competitive,yan2019task}, with solutions for 
% single-defender \cite{garcia2019pursuit}, 
cooperative teams \cite{milutinovic2025cooperative, fuchs2010cooperative}, and heterogeneous attackers \cite{lee2024guarding, das2024heterogeneous}.

However, analytical methods face fundamental limitations under partial information \cite{shishika2021partial, maity2024cooperative, bopardikar2008discrete}.
% They require complete state knowledge, become computationally intractable with increasing agents, and often decompose team games into set of 1v1 or 2v1 subgames, potentially missing emergent cooperative behaviors \cite{das2024heterogeneous}. 
The stochastic nature of the search problem under uncertainty about the attacker's state makes it challenging to directly apply differential game techniques developed for full-state feedback settings. Moreover, standard decomposition into 1v1 or 2v1 subgames may overlook emergent cooperative behaviors.

Multi-agent reinforcement learning (MARL) methods such as MADDPG and MAPPO 
% \cite{lowe2017multi, yu2022surprising} 
address partial observability and discover cooperative strategies without complete system models. 
% Yet existing RL approaches learn end-to-end, from sensing through capture, without leveraging available analytical solutions \cite{wang2020cooperative}. This results in sample inefficiency, requiring extensive computational resources to relearn known optimal capture strategies.
 Yet existing RL approaches learn both search and pursuit behaviors end-to-end, without leveraging available analytical solutions \cite{wang2020cooperative}. This results in sample inefficiency, requiring extensive computational resources to relearn known optimal capture strategies.

% This paper presents a hybrid framework combining differential game theory with MARL for border defense under partial observability. We employ RL for the stochastic search phase where analytical solutions are intractable, while using Apollonius Circle optimization to compute Nash equilibrium payoffs for early termination upon sensing. This eliminates learning the deterministic pursuit phase, focusing computational resources on search coordination.
% Our contributions are threefold: 
This paper presents a hybrid framework combining differential game theory with MARL for border defense under partial observability.
% , using
% Apollonius Circle optimization 
% the Apollonius Circle to analytically compute 
% Nash equilibrium payoffs for early termination upon sensing. 
Upon detecting the attacker, we use the Apollonius Circle to analytically compute the Nash equilibrium payoff of the resulting pursuit game, replacing simulation of the pursuit phase with this closed-form reward during RL training.

Our contributions are threefold:
\begin{itemize}
    \item We present a novel hybrid framework that combines analytical game-theory (GT) solutions in the MARL training loop to improve learning efficiency,
    \item We empirically demonstrate that this approach achieves 
    % better spatial configurations, 
    significantly higher rewards and faster convergence compared to end-to-end learning.
    \item We show that our method is effective across different team sizes, enabling the learning of robust search and coordination strategies.
\end{itemize}

% \textbf{Paper Organization:} Section II formulates the two-phase border defense problem. Section III presents the Apollonius Circle method for Nash equilibrium computation. Section IV details GT-assisted early termination integration with MARL. Section V provides experimental evaluation using MAPPO. Section VI concludes with future directions.
\textbf{Paper Organization:} Section II formulates the two-phase border defense problem. Section III presents the Apollonius Circle method for Nash equilibrium computation. Section IV details the MARL formulation with GT-assisted early termination. Section V provides experimental evaluation using MAPPO. Section VI concludes with future directions.

\section{Problem Formulation}
\begin{figure}[tp]
    \centering
\includegraphics[width=0.35\textwidth]{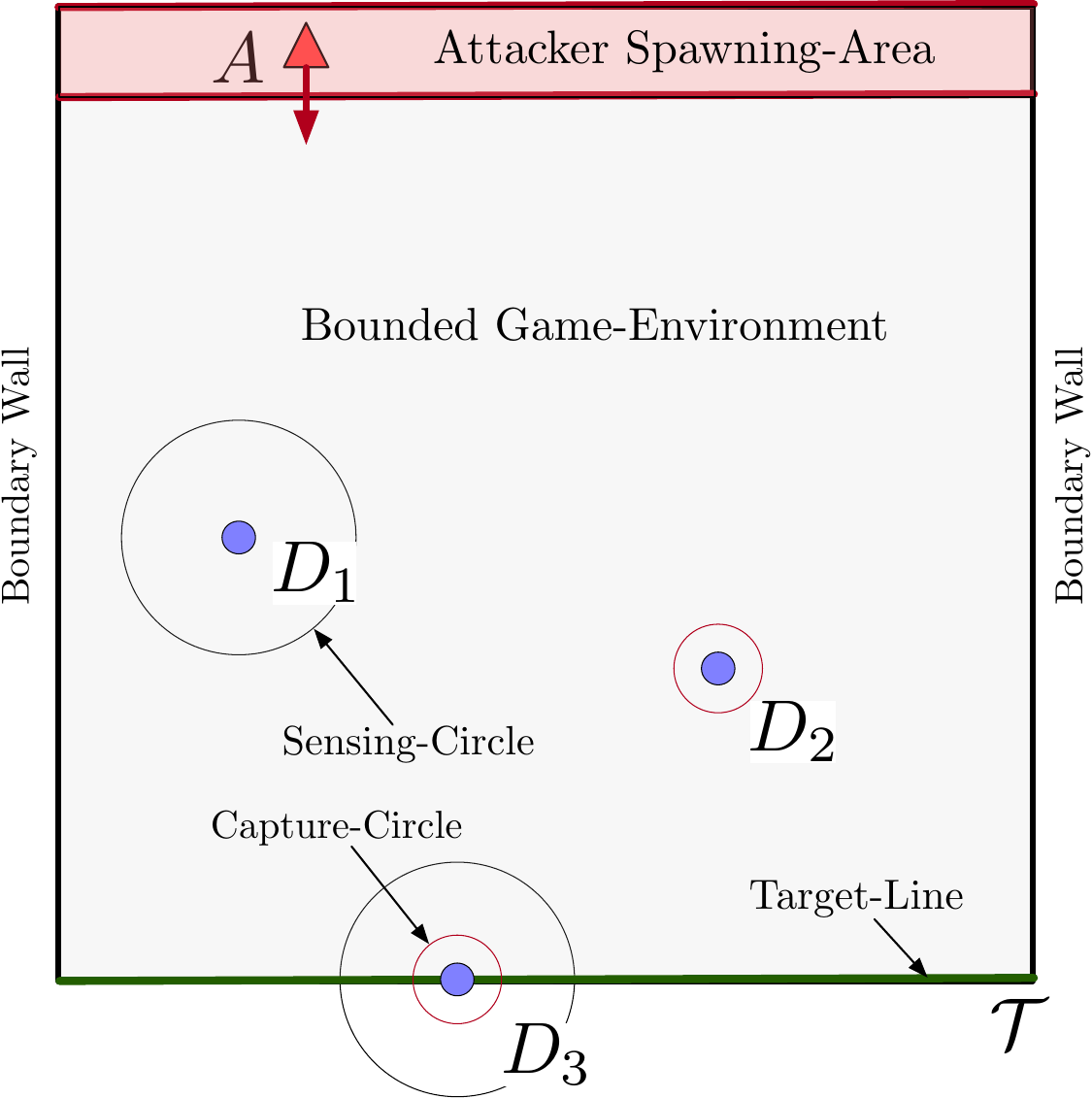}
    \caption{Illustration of the border-defense game environment. An attacker ($A$) spawns in the designated area at the top and attempts to reach the target line ($\mathcal{T}$) at the bottom. The defender team, composed of agents with varying sensing ($D_1$, $D_2$) and capture ($D_2$, $D_3$) capabilities, must coordinate to intercept the attacker.}
    \label{fig: game illustration}
\end{figure}

\noindent
We consider a target defense game  in a bounded, two-dimensional environment, $\Omega = [0, 1] \times [0, 1]$. The game involves a team of $n_d$ defenders, $D_i$ for $i \in \mathcal{I}=\{1, 2, ..., n_d\}$, and a single attacker, $A$. The defenders' objective is to protect a target boundary, $\mathcal{T}$, located at the bottom edge of the environment, defined as $\mathcal{T} = \{(x, y) \in \Omega \mid y = 0\}.
$
% \begin{align}
%     \mathcal{T} = \{(x, y) \in \Omega \mid y = 0\}.
% \end{align}

The  state of the game at time $t$ is given by $\mathbf{x}(t) \triangleq [\mathbf{x}_{D_1}^\top(t), \ldots, \mathbf{x}_{D_{n_d}}^\top(t), \mathbf{x}_A^\top(t)]^\top \in \mathbb{R}^{2(n_d+1)}$, which concatenates the positions of all agents, where $\mathbf{x}_k(t) = [x_k(t), y_k(t)]^\top$ denotes the position of agent $k \in \{D_1, D_2, \dots, D_{n_d}, A\}$. For notational brevity, we omit the time argument $(t)$ when the context is clear.
All agents are modeled with single-integrator dynamics:
% \begin{subequations}\label{eq:kinematics}
% \begin{align}
%     \dot{\mathbf{x}}_A &= v_A \mathbf{u}_A, \\
%     \dot{\mathbf{x}}_{D_i} &= v_{D_i} \mathbf{u}_{D_i},
% \end{align}
% \end{subequations}
\begin{align}\label{eq:kinematics}
    \dot{\mathbf{x}}_k = v_k \mathbf{u}_k, \quad k \in \{D_1, \ldots, D_{n_d}, A\},
\end{align}

where $v_k$ is the maximum speed of agent $k$, and $\mathbf{u}_k$ is its control input, constrained to the set of admissible unit vectors, $\mathcal{U} \triangleq \{\mathbf{u} \in \mathbb{R}^2 \mid \|\mathbf{u}\| \le 1\}$.

Each defender $D_i$ is characterized by a sensing radius $\rho_s^{(i)} \ge 0$ and a capture radius $\rho_c^{(i)} \ge 0$ , as illustrated in Figure~\ref{fig: game illustration}. We assume each defender is faster than the attacker, defining a speed ratio $\nu_i = v_{D_i}/v_A > 1$ for all $i \in \mathcal{I}$, which is a necessary condition for capture.

The game begins under bilateral information asymmetry, with neither side having knowledge of the opponent's initial positions. Both the attacker's and the defenders' starting positions are drawn from uniform distributions over separate, predefined regions $\Omega_A, \Omega_D \subset \Omega$:
\begin{align}
    \mathbf{x}_A(0) &\sim \text{Uniform}(\Omega_A), \label{eq:attacker_init}  \\
    \{\mathbf{x}_{D_i}(0)\}_{i \in \mathcal{I}} &\sim \text{Uniform}(\Omega_D). \label{eq:defender_init}
\end{align}
% where $ \Omega_A, \Omega_D \subset \Omega$. 
The game thus starts in a partial-information setting, which transitions to a perfect-information scenario  when the attacker is sensed at $t_s$:
\begin{align}\label{eq:sensing_time}
    t_s = \inf\{t \ge 0 \mid \exists i \in \mathcal{I} \text{ s.t. } \|\mathbf{x}_A - \mathbf{x}_{D_i}\| \le \rho_s^{(i)}\}.
\end{align}
% We assume that, after sensing, the attacker's state is known to all defenders for all $t \ge t_s$. 
We assume that, after sensing, full state information trivially becomes available to all agents for $t \ge t_s$.

We denote the defender and attacker policies as $\gamma_{D_i}$ and $\gamma_A$, respectively. Prior to sensing ($t < t_s$), each defender employs a policy based on the full-state information of all defenders:
% team-shared partial observations:
\begin{align}\label{eq:defender_policy_pre}
    \gamma_{D_i} : \mathcal{O}_{i} \to \mathcal{U}, \quad \text{for } i \in \mathcal{I},
\end{align}
where $\mathcal{O}_{i}$ represents the observation space of defender $i$, which includes its own position and the positions of other defenders. After sensing ($t \ge t_s$), the defender policies have access to the full state information:
\begin{align}
    \gamma_{D_i} : \mathbb{R}^{2(n_d+1)} \to \mathcal{U}.
\end{align}
Similarly, the attacker employs a two-stage policy:
\begin{align}\label{eq:attacker_policy}
    \gamma_A = \begin{cases}
        \mathbf{u}_{\text{direct}} = [0, -1]^\top & \text{if } t < t_s, \\
        \gamma_A^*(\mathbf{x}(t)) : \mathbb{R}^{2(n_d+1)} \to \mathcal{U} & \text{if } t \ge t_s,
    \end{cases}
\end{align}
where $\mathbf{u}_{\text{direct}}$ denotes the attacker's fixed policy before sensing, and $\gamma_A^*(\mathbf{x}(t))$ represents the attacker's optimal evasion strategy upon sensing. The direct-descent policy in Phase~I is justified by the bilateral information asymmetry: since the attacker has no sensing capability and no knowledge of defender positions, any deviation from the shortest path to $\mathcal{T}$ would only increase the time to reach the target without providing any informational benefit.

The game terminates at time $t_f$ when the state $\mathbf{x}(t_f)$ enters the terminal manifold $\mathcal{S} = \mathcal{S}_A \cup \mathcal{S}_D$, where:
\begin{subequations}
    \begin{align}
        \mathcal{S}_A &= \{\mathbf{x} \mid y_A = 0\}, \\
        \mathcal{S}_D &= \{\mathbf{x} \mid \exists i \in \mathcal{I}: 
        % \text{ with } \rho_c^{(i)} > 0,
        \|\mathbf{x}_A - \mathbf{x}_{D_i}\| \le \rho_c^{(i)}\}.
    \end{align}
\end{subequations}
The final time is thus $t_f = \inf\{t \ge 0 \mid \mathbf{x}(t) \in \mathcal{S}\}$.

This is a zero-sum game where the terminal payoff, $J$, is the attacker's distance from the target at the time of capture:
\begin{align}
    J = \max\{0, y_A(t_f)\}.
\end{align}
% Given the uncertainty in the players starting position, the defender team's objective is to employ a policy $\gamma_D = [\gamma_{D_1},\gamma_{D_2},\dots, \gamma_{D_{n_d}}]$, that maximizes the expected payoff, while the attacker's objective is to use a policy $\gamma_A$ to minimize it:
% \begin{align}\label{eq:expected_payoff}
%      J^* = \max_{\gamma_D} \min_{\gamma_A} \mathbb{E} [J],
% \end{align}
% where the expectation is taken over the distribution over $\mathbf{x}_A(0)$ and $\mathbf{x}_{D_i}(0)$. Since this is a zero-sum game, the saddle-point condition guarantees $\max_{\gamma_D} \min_{\gamma_A} \mathbb{E}[J] = \min_{\gamma_A} \max_{\gamma_D} \mathbb{E}[J]$, so the order of optimization is interchangeable; we adopt the maximizer-first form as our focus is on learning the defender policy while the attacker policy is fixed as described in \eqref{eq:attacker_policy}.

Given the uncertainty in the players' starting positions, the general zero-sum formulation is:
\begin{align}\label{eq:expected_payoff}
     J^* = \max_{\gamma_D} \min_{\gamma_A} \mathbb{E} [J],
\end{align}
where $\gamma_D = [\gamma_{D_1},\gamma_{D_2},\dots, \gamma_{D_{n_d}}]$ and the expectation is over initial conditions and any stochasticity in the policies. In this work, we fix the attacker policy as described in \eqref{eq:attacker_policy} and focus on learning the defender policy, reducing the problem to a one-sided optimization:
\begin{align}\label{eq:defender_objective}
     J^* = \max_{\gamma_D} \mathbb{E} [J \mid \gamma_A].
\end{align}

% \subsection{Team Composition}
% \noindent
% We evaluate defender teams with two distinct compositions based on their sensing and capture roles:
% \begin{itemize}
%     \item \textbf{Homogeneous Team:} All defenders are identical \emph{hybrid} agents, equipped with both sensing and capture capabilities, i.e., $\rho_s^{(i)} = \rho_s > 0$ and $\rho_c^{(i)} = \rho_c > 0$ for all $i \in \mathcal{I}$.
%     \item \textbf{Heterogeneous Team:} Defenders have specialized roles. The team consists of dedicated \emph{sensors} ($\rho_s^{(i)} > 0, \rho_c^{(i)} = 0$) and \emph{pursuers} ($\rho_s^{(i)} = 0, \rho_c^{(i)} > 0$), thus requiring explicit coordination between agents to achieve a capture. Figure~\ref{fig: game illustration} illustrates the border defense game environment with heterogeneous defender roles.
% \end{itemize}

\section{Game Theoretic Analysis}
\noindent
% In this section, we discuss analytical insights of the problem from a game-theoretic perspective. We first decompose the game into two distinct phases based on the information available to the agents. We then leverage the analytical solution of the second phase to develop our GT-assisted early termination mechanism for reinforcement learning.
We decompose the game into two phases based on available information and leverage the analytical solution of the second phase to develop GT-assisted early termination for RL.
\subsection{Phase Decomposition and Strategic Coupling}
\noindent
The target defense game can be naturally divided into two sequential phases:

% \begin{definition}[Phase I - Stochastic Search]
% The search phase spans the interval $[0, t_s)$ where:
% \begin{align}
%     \forall t \in [0, t_s): \quad \|\mathbf{x}_A(t) - \mathbf{x}_{D_i}(t)\| > \rho_s^{(i)}, \quad \forall i \in \mathcal{I}.
% \end{align}
% During this phase, the defender team plays the game under partial observability with observation space:
% \begin{equation}
% \mathcal{O}^{I}_{i} \subseteq \mathbb{R}^{2n_d}, 
% \qquad 
% o^i_t = \begin{bmatrix} 
% x_{D_1}^\top & x_{D_2}^\top & \cdots & x_{D_{n_d}}^\top
% \end{bmatrix}^\top.
% \end{equation}
% \end{definition}
\begin{definition}[Phase I - Stochastic Search]
The search phase spans $[0, t_s)$, during which no defender has sensed the attacker (cf.\ \eqref{eq:sensing_time}). Each defender's policy depends only on the positions of all defenders as defined in \eqref{eq:defender_policy_pre}.
\end{definition}

\begin{definition}[Phase II - Deterministic Pursuit]
The pursuit phase begins at $t_s$ when sensing occurs:
\begin{align}
    \exists i \in \mathcal{I}: \quad \|\mathbf{x}_A(t_s) - \mathbf{x}_{D_i}(t_s)\| = \rho_s^{(i)}.
\end{align}
For all $t \geq t_s$, defenders have access to the perfect state:
\begin{align}
    \mathcal{O}_{i}^{\text{II}} = \mathbf{x}(t) = [\mathbf{x}_{D_1}(t), \ldots, \mathbf{x}_{D_{n_d}}(t), \mathbf{x}_A(t)]^\top.
\end{align}
\end{definition}
 
%  Therefore, Phase I constitutes a stochastic optimization problem. In contrast, Phase II is a deterministic game of perfect information, for which an optimal defender policy can be derived. The optimal defender policy $\gamma_D^{\text{I}*}$ must maximize the expected probability of sensing given uncertainty in $\mathbf{x}_A(0)$:
%     \begin{align}
%         \gamma_D^{\text{I}*} = \arg\max_{\gamma_D} \mathbb{P}[t_s < \infty \mid \mathbf{x}_A(0) \sim \text{Uniform}(\Omega_A)].
%     \end{align}
% Given the state $\mathbf{x}(t_s)$ at sensing time $t_s$, the game outcome is deterministically computable through the Nash equilibrium of the perfect information pursuit game. Note that maximizing the sensing probability is related to, but not equivalent to, minimizing the expected sensing time; moreover, as discussed in Remark~1, the full objective must also account for the quality of the spatial configuration at $t_s$, not just whether or when sensing occurs.
Therefore, Phase I constitutes a stochastic optimization problem. In contrast, Phase II is a deterministic game of perfect information, for which an optimal defender policy can be derived. 
% A naive Phase~I objective would maximize the sensing probability alone; however, as formalized in Section~III-C, the full objective must jointly account for both sensing probability and configuration quality:
% The full Phase~I objective, formalized in Section~III-C, is:
% \begin{align}
%     \gamma_D^{\text{I}*} = \arg\max_{\gamma_D} \mathbb{E}\left[\mathbb{I}[t_s < \infty] \cdot J^*(\mathbf{x}(t_s))\right],
% \end{align}
% where the expectation is over both initial conditions $\mathbf{x}_A(0)$ and $\{\mathbf{x}_{D_i}(0)\}_{i \in \mathcal{I}}$. Given the state $\mathbf{x}(t_s)$ at sensing time $t_s$, the game outcome is deterministically computable through the Nash equilibrium of the perfect information pursuit game.
The full Phase~I objective, formalized in Section~III-C, is:
    \begin{align}\label{eq:gamma^I_D*}
        \gamma_D^{\text{I}*} = \arg\max_{\gamma_D} \mathbb{E}\left[\mathbb{I}_{t_s < t_f} \cdot J^*(\mathbf{x}(t_s))\right],
    \end{align}
where $\mathbb{I}_{t_s < t_f}$ is the indicator function that equals 1 if sensing occurs before game termination and 0 otherwise, and the expectation is over initial conditions and policy stochasticity. Note that cases where sensing occurs but the defenders cannot prevent a breach are handled by $J^*(\mathbf{x}(t_s)) = 0$.

% \textbf{Critical Insight:} 
% \begin{rem}[Search-Pursuit Trade-off]
% The optimal Phase I policy must balance two competing objectives:
% \begin{enumerate}
%     \item \textbf{Sensing Probability:} Maximize $\mathbb{P}[t_s < \infty]$ through effective spatial coverage.
%     \item \textbf{Pursuit Readiness:} Achieve sensing with configuration $\mathbf{x}(t_s)$ that maximizes the Phase~II Nash equilibrium payoff $J^*(\mathbf{x}(t_s))$.
% \end{enumerate}
% A myopic search strategy maximizing only sensing probability may result in poor pursuit configurations, yielding suboptimal overall performance despite successful sensing.
% \end{rem}
\begin{rem}[Search-Pursuit Trade-off]
The optimal Phase~I policy must balance two competing objectives: maximizing the probability of sensing before game termination, $\mathbb{P}[t_s < t_f]$, through effective spatial coverage, and achieving a configuration $\mathbf{x}(t_s)$ that maximizes the defender payoff. A myopic search strategy maximizing only sensing probability may result in poor pursuit configurations, yielding suboptimal overall performance despite successful sensing.
\end{rem}

\subsection{Analytical Solution for the Pursuit Phase}\label{subsec:apollonius}
\noindent
Once sensing occurs at time $t_s$, Phase II becomes a perfect information pursuit-evasion game. We leverage the geometric structure of the Apollonius Circle to compute the Nash equilibrium outcome analytically.

\subsubsection{Single Defender Case}
\begin{figure}[tp]
    \centering
    \includegraphics[width=0.4\textwidth]{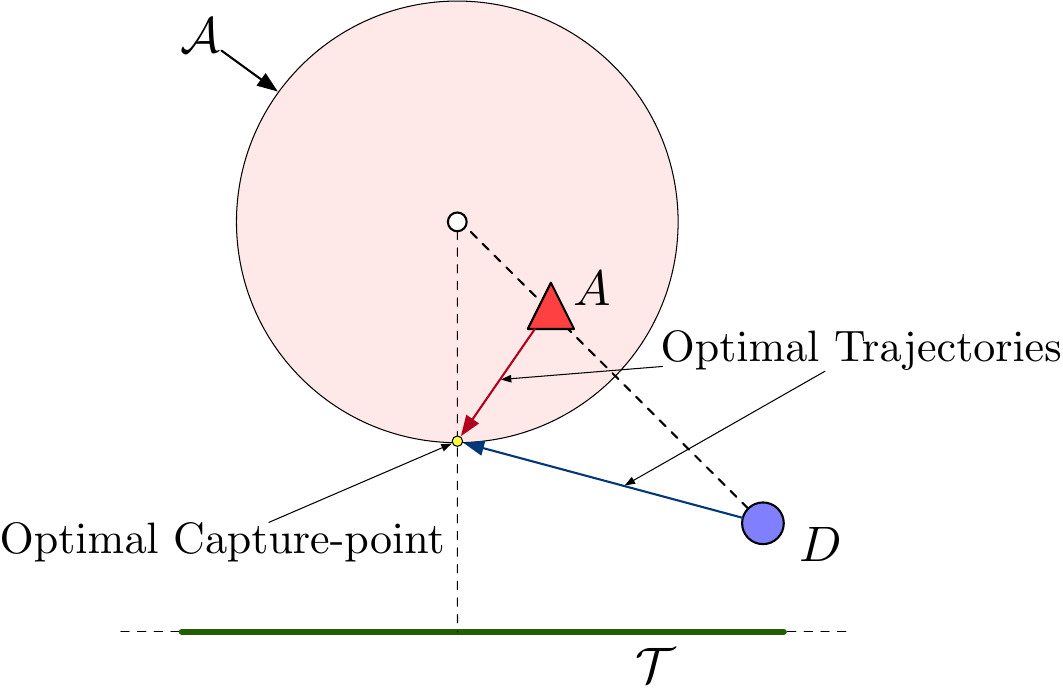}
    \caption{Geometric representation of the optimal capture strategy in the deterministic pursuit phase (Phase II). The Apollonius Circle ($\mathcal{A}$) defines the boundary of the attacker's dominance region. The optimal capture point is the location on the circle with the minimum y-coordinate, and the optimal trajectories for both agents converge at this point.}
    \label{fig: Apollonius_circle}
\end{figure}
\noindent
% For a single defender-attacker pair with speed ratio $\nu = v_D/v_A > 1$, the Apollonius Circle characterizes the set of points where the time-to-reach ratio equals the inverse speed ratio.
For a single defender-attacker pair with speed ratio $\nu = v_D/v_A > 1$, the Apollonius Circle $\mathcal{A}(\mathbf{x}_A, \mathbf{x}_D, \nu)$ is the locus of points $\mathbf{p}$ satisfying $\|\mathbf{p} - \mathbf{x}_A\| / \|\mathbf{p} - \mathbf{x}_D\| = 1/\nu$.
% \begin{definition}[Apollonius Circle]
% Given defender position $\mathbf{x}_D$ and attacker position $\mathbf{x}_A$ with speed ratio $\nu > 1$, the Apollonius Circle \cite{dorothy2024one} is:
% \begin{align}
%     \mathcal{A}(\mathbf{x}_A, \mathbf{x}_D, \nu) = \left\{\mathbf{p} \in \mathbb{R}^2 : \frac{\|\mathbf{p} - \mathbf{x}_A\|}{v_A} = \frac{\|\mathbf{p} - \mathbf{x}_D\|}{v_D}\right\}.
% \end{align}
% \end{definition}
% The geometric properties of this circle are:
% \begin{prop}[Circle Properties]
The center $\mathbf{c}$ and radius $r$ of $\mathcal{A}$ are given by:
\begin{subequations}
\begin{align}
    \mathbf{c} &= \frac{\nu^2 \mathbf{x}_A - \mathbf{x}_D}{\nu^2 - 1}, \label{eq:apo_center}\\
    r &= \frac{\nu \|\mathbf{x}_A - \mathbf{x}_D\|}{|\nu^2 - 1|}. \label{eq:apo_radius}
\end{align}
\end{subequations}

% \begin{lm}[Dominance Region \cite{dorothy2024one}]\label{lem:dominance}
% For a pursuit game with speed ratio $\nu > 1$, the closure of the Apollonius Circle $\text{cl}(\mathcal{A})$ constitutes the attacker's dominance region—the set of points the attacker can reach no later than the defender, which is given by the closed dominance disk
% \begin{align}\label{eq:dominance_disk}
%    \mathcal{D}\left(x_A, x_{D_i}, \nu_i\right)=\left\{p:\left\|p-c_i\right\| \leq r_i\right\}.
% \end{align}
% \end{lm}
\begin{lm}[Dominance Region \cite{Isaacs1965, dorothy2024one}]\label{lem:dominance}
The closed disk $\mathcal{D}(\mathbf{x}_A, \mathbf{x}_{D_i}, \nu_i) = \{\mathbf{p} : \|\mathbf{p} - \mathbf{c}_i\| \leq r_i\}$ is the attacker's dominance region—the set of points the attacker can reach no later than the defender.
\label{eq:dominance_disk}
\end{lm}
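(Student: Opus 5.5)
We prove the lemma by interpreting ``reach'' as minimum time under the simple-motion dynamics \eqref{eq:kinematics} with speeds $v_A$ and $v_{D_i}$. Since both agents have unit-bounded controls and no obstacles, the earliest time the attacker can be at a point $\mathbf{p}$ is $T_A(\mathbf{p}) = \|\mathbf{p}-\mathbf{x}_A\|/v_A$, attained by the straight-line path at full speed. For the defender it is $T_D(\mathbf{p}) = \|\mathbf{p}-\mathbf{x}_{D_i}\|/v_{D_i}$. The first step is to justify these formulas: the bound $\|\dot{\mathbf{x}}_k\|\le v_k$ together with the triangle inequality for integrals gives $\|\mathbf{p}-\mathbf{x}_k\| \le v_k t$ for any admissible trajectory reaching $\mathbf{p}$ at time $t$, and the straight-line trajectory achieves equality. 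The attacker's dominance region is therefore exactly $\{\mathbf{p} : T_A(\mathbf{p}) \le T_D(\mathbf{p})\}$, which is equivalent to
\begin{align*}
\nu_i^2 \|\mathbf{p}-\mathbf{x}_A\|^2 \le \|\mathbf{p}-\mathbf{x}_{D_i}\|^2,
\end{align*}
after multiplying through by $v_{D_i}$ and squaring. Squaring is legitimate because both sides are nonnegative.

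The second step is algebraic: we show this quadratic inequality describes the closed disk with center \eqref{eq:apo_center} and radius \eqref{eq:apo_radius}. Expanding both sides gives
\begin{align*}
(\nu_i^2-1)\|\mathbf{p}\|^2 - 2\mathbf{p}^\top(\nu_i^2\mathbf{x}_A - \mathbf{x}_{D_i}) + \nu_i^2\|\mathbf{x}_A\|^2 - \|\mathbf{x}_{D_i}\|^2 \le 0 .
\end{align*}
Because $\nu_i>1$, we may divide by $\nu_i^2-1>0$ without flipping the inequality. Completing the square then yields $\|\mathbf{p}-\mathbf{c}_i\|^2 \le \|\mathbf{c}_i\|^2 - (\nu_i^2\|\mathbf{x}_A\|^2-\|\mathbf{x}_{D_i}\|^2)/(\nu_i^2-1)$, with $\mathbf{c}_i$ as in \eqref{eq:apo_center}. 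The remaining task is to verify that the right-hand side equals $r_i^2 = \nu_i^2\|\mathbf{x}_A-\mathbf{x}_{D_i}\|^2/(\nu_i^2-1)^2$. Putting the right-hand side over the common denominator $(\nu_i^2-1)^2$, the numerator is
\begin{align*}
\|\nu_i^2\mathbf{x}_A-\mathbf{x}_{D_i}\|^2 - (\nu_i^2-1)(\nu_i^2\|\mathbf{x}_A\|^2-\|\mathbf{x}_{D_i}\|^2).
\end{align*}
Expanding, the $\nu_i^4\|\mathbf{x}_A\|^2$ terms and the $\|\mathbf{x}_{D_i}\|^2$ terms cancel, and what remains is $\nu_i^2(\|\mathbf{x}_A\|^2 - 2\mathbf{x}_A^\top\mathbf{x}_{D_i} + \|\mathbf{x}_{D_i}\|^2) = \nu_i^2\|\mathbf{x}_A-\mathbf{x}_{D_i}\|^2$, as required. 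Equality in the original inequality recovers the Apollonius circle $\mathcal{A}$, so the boundary of $\mathcal{D}$ is $\mathcal{A}$.

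The main obstacle is not the algebra but the modeling step. We must make precise what ``can reach no later than the defender'' means, since the region is not open-loop: the defender may react to the attacker's motion. For the lemma as stated, we adopt the static time-to-reach interpretation, which is the definition used in \cite{Isaacs1965, dorothy2024one}. We would also remark that the sign condition $\nu_i>1$ is essential. It guarantees that the quadratic form is positive definite, so the set is a bounded disk rather than the exterior of a disk ($\nu_i<1$) or a half-plane ($\nu_i=1$). This also explains why $|\nu^2-1|$ appears in \eqref{eq:apo_radius}.
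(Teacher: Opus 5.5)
Your proof is correct, and it takes a different route from the paper only because the paper gives no proof: it imports the lemma from Isaacs and Dorothy et al.\ and simply states the center and radius formulas. Your argument is therefore a self-contained replacement. You reduce ``reach no later than'' to comparing minimum times, $\|\mathbf{p}-\mathbf{x}_A\|/v_A \le \|\mathbf{p}-\mathbf{x}_{D_i}\|/v_{D_i}$, justified by the integral triangle inequality. You then complete the square to show this set is exactly the closed disk with the stated $\mathbf{c}_i$ and $r_i$. The algebra checks out, including the cancellation that leaves $r_i^2=\nu_i^2\|\mathbf{x}_A-\mathbf{x}_{D_i}\|^2/(\nu_i^2-1)^2$.

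What the citation supplies that your static reading does not, as written, is the game-theoretic fact that Theorem~\ref{thm:nash} actually uses: the attacker can secure any point of $\mathcal{D}$ against a reactive defender. The obstacle you flag closes in a few lines, so you may as well include it.
\begin{itemize}
\item For $\mathbf{x}_A\neq\mathbf{x}_{D_i}$, $\|\mathbf{x}_A-\mathbf{c}_i\| = \|\mathbf{x}_A-\mathbf{x}_{D_i}\|/(\nu_i^2-1) < r_i$. So the attacker starts in the interior of the disk, and its straight full-speed run to any $\mathbf{p}\in\mathcal{D}$ stays in the open disk except possibly at the endpoint.
\item Suppose a defender, even using closed-loop control, achieves point capture at a path point $\mathbf{q}$ at time $T_A(\mathbf{q})$. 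Then $\|\mathbf{q}-\mathbf{x}_{D_i}\|\le v_{D_i}T_A(\mathbf{q})$, i.e.\ $T_D(\mathbf{q})\le T_A(\mathbf{q})$. Inside $\mathcal{D}$ this can hold only with equality, i.e.\ only on the circle $\mathcal{A}$, hence only at the endpoint $\mathbf{p}$.
\item Conversely, for $\mathbf{p}\notin\mathcal{D}$, the defender runs straight to $\mathbf{p}$ and waits there, so it arrives strictly first.
\end{itemize}
Hence the static and dynamic readings of ``dominance region'' coincide, and with this addition your proof also covers the stronger statement the paper relies on.
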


\begin{theorem}[Nash Equilibrium Payoff]\label{thm:nash}
For the Phase II pursuit game starting at configuration $\mathbf{x}(t_s)$, the Nash equilibrium payoff is:
\begin{align}\label{eq:nash_payoff_single}
    J^*(\mathbf{x}(t_s)) = \max\{0, c_y - r\},
\end{align}
where $c_y$ is the y-coordinate of the Apollonius Circle center and $r$ is its radius.
\end{theorem}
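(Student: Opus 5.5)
The plan is to exhibit a saddle point: one strategy for each player, each of which guarantees the value $J^\star := \max\{0, c_y - r\}$ against every response of the opponent. The construction is the standard one in \cite{Isaacs1965, dorothy2024one}.

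Throughout, Lemma~\ref{lem:dominance} is used for the single-defender case of Phase~II. Fix the capture radius as a point-capture idealization; a positive $\rho_c$ only changes the reachability comparison by a constant and is deferred.

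\textbf{Attacker side (upper bound).} First take the case $c_y - r > 0$. Let $\mathbf{p}^\star = \mathbf{c} - r[0,1]^\top$ be the lowest point of the disk $\mathcal{D}$, so $p^\star_y = c_y - r$.

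Let the attacker run straight toward $\mathbf{p}^\star$ at full speed. By Lemma~\ref{lem:dominance}, the attacker reaches $\mathbf{p}^\star$ no later than the defender can. Capture can only happen at a point where the defender arrives simultaneously. Such a point lies on the segment from $\mathbf{x}_A$ to $\mathbf{p}^\star$, and that segment is inside the convex disk $\mathcal{D}$.

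The $y$-coordinate along this segment is at least $p^\star_y$, so $J \le c_y - r$. If instead $c_y - r \le 0$, the disk meets $\mathcal{T}$. The attacker then heads to a point of $\mathcal{D}\cap\{y=0\}$ and reaches it before capture, so $J = 0$.

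\textbf{Defender side (lower bound).} This is the main obstacle, because the defender must guarantee $J \ge c_y - r$ against an arbitrary and possibly adaptive attacker. The tool I would use is the monotonicity of the dominance region: if the defender plays the feedback strategy of heading toward the current optimal point $\mathbf{p}^\star(t)$, the disk $\mathcal{D}(t)$ is nonincreasing, $\mathcal{D}(t_2)\subseteq \mathcal{D}(t_1)$ for $t_2 > t_1$.

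I would try to prove this directly from Lemma~\ref{lem:dominance}. Any point the attacker can reach before the defender at a later time could also have been reached first from the earlier configuration, since both players keep their full-speed reachable sets.

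Granting the monotonicity, two facts close the argument:
\begin{itemize}
\item[(i)] $\mathbf{x}_A(t)\in\mathcal{D}(t)\subseteq\mathcal{D}(t_s)$, so $y_A(t)\ge c_y - r$ for all $t$, where $c_y$ and $r$ are evaluated at $t_s$.
\item[(ii)] Because $\nu>1$, the radius $r\propto\|\mathbf{x}_A-\mathbf{x}_D\|$ shrinks under pursuit. Hence capture happens in finite time, and the attacker never reaches $y=0$ when $c_y - r>0$.
\end{itemize}
Together these give $J\ge c_y-r$. When $c_y - r\le 0$, the bound $J\ge 0$ is trivial.

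If verifying the nesting of disks proves delicate, I would fall back on a value-function route instead. Define $V(\mathbf{x}) = c_y(\mathbf{x}) - r(\mathbf{x})$ from \eqref{eq:apo_center}--\eqref{eq:apo_radius}. Then check that $V$ is $C^1$ away from $\mathbf{x}_A=\mathbf{x}_D$ and satisfies Isaacs' equation
$$\max_{\mathbf{u}_D}\min_{\mathbf{u}_A}\left(\nabla_{\mathbf{x}_D}V\cdot v_D\mathbf{u}_D+\nabla_{\mathbf{x}_A}V\cdot v_A\mathbf{u}_A\right)=0,$$
with the saddle-point controls both pointing toward $\mathbf{p}^\star$. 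Finally, check the terminal condition $V = y_A$ at capture, since $r=0$ and $\mathbf{c}=\mathbf{x}_A$ there.

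Combining the two bounds gives the equilibrium payoff \eqref{eq:nash_payoff_single}.
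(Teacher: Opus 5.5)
Your skeleton (each player aims at the lowest point $\mathbf{p}^\star$, and each one-sided guarantee is proved separately) is the same idea the paper uses. The paper states it in a single line via Lemma~\ref{lem:dominance} and then clamps at $0$. However, two steps of your main line fail as written.

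\textbf{Attacker side.} The inference ``the $y$-coordinate along this segment is at least $p^\star_y$, so $J\le c_y-r$'' runs the wrong way: a capture on that segment would give $J\ge p^\star_y$. What you need is that no capture can occur on $[\mathbf{x}_A,\mathbf{p}^\star)$ at all. Those points lie in the open disk, where $\|\mathbf{q}-\mathbf{x}_A\|<\|\mathbf{q}-\mathbf{x}_D\|/\nu$, so the defender cannot be at $\mathbf{q}$ when the attacker is. You must also say what the attacker does if the defender is not at $\mathbf{p}^\star$ on arrival (e.g.\ keep descending), so that $y_A$ never increases afterwards.

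\textbf{Defender side.} The nesting $\mathcal{D}(t_2)\subseteq\mathcal{D}(t_1)$ is false for the strategy you specify. Suppose the defender aims at $\mathbf{p}^\star$ while the attacker runs toward a boundary point $\mathbf{q}$ not on the ray from $\mathbf{x}_D$ through $\mathbf{p}^\star$. Then the attacker's time advantage $\|\mathbf{q}-\mathbf{x}_D\|/v_D-\|\mathbf{q}-\mathbf{x}_A\|/v_A$ grows at rate $1-\hat{\mathbf{e}}_{D\mathbf{q}}\cdot\mathbf{u}_D>0$, where $\hat{\mathbf{e}}_{D\mathbf{q}}$ is the unit vector from $\mathbf{x}_D$ to $\mathbf{q}$. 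So the new disk pokes out past $\mathbf{q}$. For example, take $\nu=2$, $\mathbf{x}_A=(0,1)$, $\mathbf{x}_D=(0,0)$, with the attacker moving in the $+x$ direction for time $\delta$: the center shifts by $\tfrac{2\sqrt5}{3}\delta$ while $r$ drops by only $\tfrac43\delta+O(\delta^2)$.

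Your reachable-set justification breaks because $\|\mathbf{q}-\mathbf{x}_D(t_1)\|/v_D$ can be strictly smaller than $(t_2-t_1)+\|\mathbf{q}-\mathbf{x}_D(t_2)\|/v_D$ when the defender did not move toward $\mathbf{q}$. Nesting requires the defender to aim where the attacker's current heading meets the circle (Isaacs' strategy), which needs $\mathbf{u}_A$.

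\textbf{The repair.} You do not need nesting, only monotonicity of $V=c_y-r$, and your own fallback computation gives exactly that. With $\hat{\mathbf{e}}=(\mathbf{x}_A-\mathbf{x}_D)/\|\mathbf{x}_A-\mathbf{x}_D\|$, one has $\nabla_{\mathbf{x}_D}V=(\nu\hat{\mathbf{e}}-\mathbf{e}_y)/(\nu^2-1)$ and $\nabla_{\mathbf{x}_A}V=\nu(\nu\mathbf{e}_y-\hat{\mathbf{e}})/(\nu^2-1)$. These satisfy $v_D\|\nabla_{\mathbf{x}_D}V\|=v_A\|\nabla_{\mathbf{x}_A}V\|$.
\begin{itemize}
\item With $\mathbf{u}_D$ aimed at $\mathbf{p}^\star(t)$, $\dot V\ge 0$ against every $\mathbf{u}_A$, hence $y_A(t)\ge V(t)\ge V(t_s)$ because $\mathbf{x}_A(t)\in\mathcal{D}(t)$.
\item Symmetrically, with $\mathbf{u}_A$ aimed at $\mathbf{p}^\star(t)$, $\dot V\le 0$ against every $\mathbf{u}_D$, and $\dot y_A\le -v_A\sqrt{1-\nu^{-2}}$. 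Since $V=y_A$ at capture, this repairs the attacker side and also gives termination.
\end{itemize}

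Finally, your step (ii) is not automatic. With $\mathbf{u}_D$ aimed at $\mathbf{p}^\star$ rather than at the attacker, $\tfrac{d}{dt}\|\mathbf{x}_A-\mathbf{x}_D\|\le v_A\bigl(1-\sqrt{\nu^2-1}\bigr)$, which is negative only for $\nu>\sqrt2$. That covers the paper's $\nu=3.33$, but not $\nu$ close to $1$.
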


\begin{proof}
The result follows from Lemma~\ref{lem:dominance}: the attacker's optimal strategy is to move toward the lowest point of the dominance region, $\mathbf{p}^* = [c_x, c_y - r]^\top$, while the defender moves to intercept at $\mathbf{p}^*$. The payoff is clamped at 0 when $c_y - r < 0$, indicating the attacker reaches $\mathcal{T}$ before interception.
\end{proof}
% Under optimal play, the attacker aims to minimize its terminal y-coordinate while the defender seeks to maximize it. From Lemma~\ref{lem:dominance}, the attacker can reach any point in $\text{int}(\mathcal{A})$ before interception.

% Thus the optimal evasion policy $\gamma_A^*$ directs the attacker toward the capture point $\mathbf{p}^*$:
% \begin{align}
%     \gamma_A^*(\mathbf{x}(t)) = \frac{\mathbf{p}^* - \mathbf{x}_A(t)}{\|\mathbf{p}^* - \mathbf{x}_A(t)\|}.
% \end{align}
% where
% \begin{align}
%     \mathbf{p}^* = \arg\min_{\mathbf{p} \in \mathcal{D}} p_y,
% \end{align}
% and $\mathcal{D}$ is given in \eqref{eq:dominance_disk}. 

% Since $\mathcal{D}$ is a circular disk with center $\mathbf{c} = [c_x, c_y]^\top$ and radius $r$, the minimum y-coordinate occurs at:
% \begin{align}
%     \mathbf{p}^* = \mathbf{c} - r\mathbf{e}_y = [c_x, c_y - r]^\top.
% \end{align}

% If $c_y - r < 0$, the attacker reaches the target line ($y = 0$) before interception, yielding payoff 0. Otherwise, the defender intercepts at $y = c_y - r$.

% The defender's optimal strategy is to move directly toward $\mathbf{p}^*$, ensuring interception occurs exactly on the Apollonius Circle boundary, preventing the attacker from reaching any point with lower y-coordinate. Figure~\ref{fig: Apollonius_circle} illustrates the geometric structure of the Apollonius Circle and the optimal trajectories converging at the capture point.
% \end{proof}

\begin{rem}[Effect of Nonzero Capture Radius]
The Nash payoff in \eqref{eq:nash_payoff_single} assumes point capture ($\rho_c = 0$). When $\rho_c > 0$, the defender can intercept the attacker once $\|\mathbf{x}_A - \mathbf{x}_{D_i}\| \le \rho_c$, allowing capture to occur before the agents reach the point-capture meeting location predicted by the Apollonius Circle analysis. Consequently, \eqref{eq:nash_payoff_single} provides a conservative approximation of the true Phase~II payoff when $\rho_c > 0$.
\end{rem}

\subsubsection{Multiple Defender Case}
\noindent
For multiple defenders with capture capabilities, the attacker's dominance region becomes the intersection of individual Apollonius Circles.

\begin{figure}[t]
    \centering
    \includegraphics[width=0.40\textwidth]{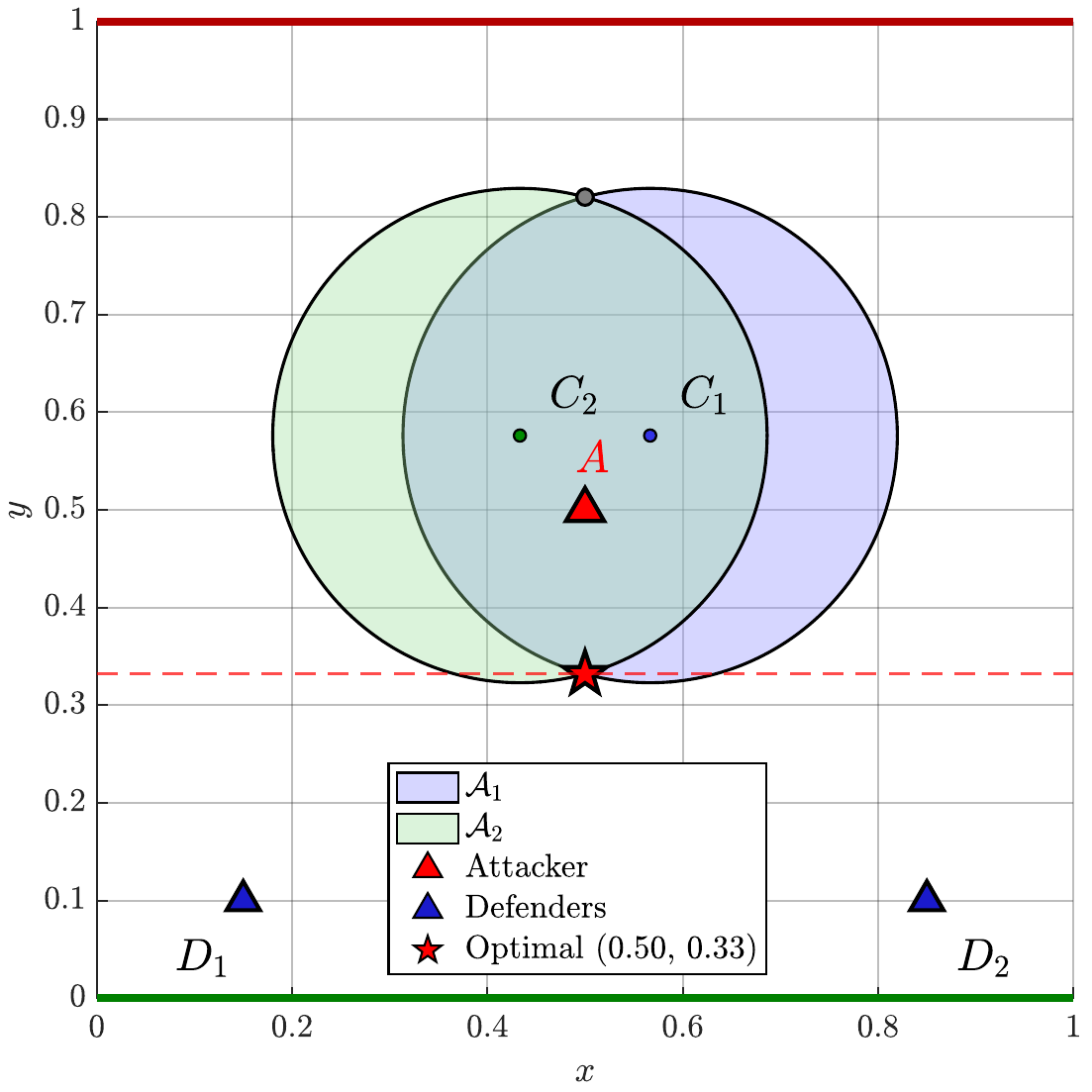}
    \caption{Optimal capture point computation for a multi-defender scenario. The attacker's reachable region is the intersection of the individual Apollonius Circles ($\mathcal{A}_1, \mathcal{A}_2$). The optimal interception point, marked by the red star, corresponds to the minimum y-coordinate within this intersection, representing the game-theoretic payoff.}
    \label{fig:two_defender}
\end{figure}
\begin{definition}[Multi-Defender Dominance Region]
Given $n_c$ defenders with capture capability at positions $\{\mathbf{x}_{D_i}\}_{i \in \mathcal{I}_c}$ and speed ratios $\{\nu_i\}_{i \in \mathcal{I}_c}$, the attacker's dominance region is:
\begin{align}
    \mathcal{R}_A = \bigcap_{i \in \mathcal{I}_c} 
    % \text{int}
    \mathcal{D}(\mathbf{x}_A, \mathbf{x}_{D_i}, \nu_i).
\end{align}
\end{definition}
Figure~\ref{fig:two_defender} illustrates this geometric relationship for a two-defender scenario, showing how the optimal capture point corresponds to the minimum y-coordinate within the intersection region.

\begin{prop}[Multi-Defender Nash Equilibrium]
For multiple defenders, the Nash equilibrium payoff is:
\begin{align}
    J^*(\mathbf{x}(t_s)) = \min_{\mathbf{p} \in \text{cl}(\mathcal{R}_A)} p_y. \label{eq:nash_payoff_multi}
\end{align}
\end{prop}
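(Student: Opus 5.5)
We argue as in the proof of Theorem~\ref{thm:nash}: we exhibit a strategy for each side that guarantees the value $p^*_y$, where $\mathbf{p}^* \in \arg\min_{\mathbf{p}\in\text{cl}(\mathcal{R}_A)} p_y$. We also clamp at $0$ exactly as in \eqref{eq:nash_payoff_single} when this minimum is negative. Throughout we assume point capture and capturing defenders only, consistent with the definition of $\mathcal{R}_A$. The first step is to note that $\mathcal{R}_A$ is a finite intersection of closed disks. It is therefore compact and convex, and it contains $\mathbf{x}_A$ because $\mathbf{x}_A$ lies strictly inside every Apollonius disk when $\nu_i>1$. Hence the minimum of the linear functional $p_y$ over it is attained, and $\mathbf{p}^*$ lies on its boundary. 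By convexity, the segment from $\mathbf{x}_A$ to $\mathbf{p}^*$ stays in $\mathcal{R}_A$.

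\textbf{Attacker guarantee.} The attacker runs straight toward $\mathbf{p}^*$. Since $\mathbf{p}^*\in\mathcal{D}(\mathbf{x}_A,\mathbf{x}_{D_i},\nu_i)$ for every $i$, Lemma~\ref{lem:dominance} says the attacker arrives no later than each $D_i$ could. Moreover, every intermediate point $\mathbf{q}$ on the segment also lies in every disk. Using the triangle-inequality form of the dominance property, $\|\mathbf{q}-\mathbf{x}_A\|/v_A \le \|\mathbf{q}-\mathbf{x}_{D_i}\|/v_{D_i}$, we get that no defender can occupy the attacker's position before the attacker reaches $\mathbf{p}^*$; boundary ties count as capture at $\mathbf{p}^*$ itself. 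Therefore capture, if it happens, occurs at some $y\le p^*_y$. If instead $p^*_y\le 0$, the segment crosses $\mathcal{T}$ first and the payoff is $0$.

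\textbf{Defender guarantee.} This is the main obstacle, because the defenders must cooperate against any attacker motion, not only the straight-line one. My plan is the standard Apollonius-disk shrinking argument, stated in the strongest form. Each $D_i$ plays the single-defender optimal pursuit against the current disk, which is heading toward the point of its current disk nearest the attacker's chosen direction. Equivalently, each $D_i$ uses pure-pursuit-to-Apollonius play, which keeps the disk $\mathcal{D}(\mathbf{x}_A(t),\mathbf{x}_{D_i}(t),\nu_i)$ nonincreasing in set inclusion for all $t$; this property follows from Isaacs' classical result \cite{Isaacs1965, dorothy2024one}. Then $\mathcal{R}_A(t)\subseteq\mathcal{R}_A(t_s)$ for all $t$. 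The attacker's position always lies in $\mathcal{R}_A(t)$, so its $y$-coordinate never drops below $p^*_y$ before capture, and capture happens in finite time since $\nu_i>1$ makes the radii shrink to zero. I expect verifying the nested-disk monotonicity (computing $\frac{d}{dt}$ of the support function of the disk in every direction) to be the technical core. Combining the two guarantees gives the saddle value in \eqref{eq:nash_payoff_multi}.
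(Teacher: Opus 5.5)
Your argument is correct in substance and goes well beyond what the paper offers. The paper states this proposition without proof, as the direct analogue of Theorem~\ref{thm:nash}. That theorem's proof only describes the equilibrium path (the attacker heads to the lowest point of its dominance region and the defender intercepts there), using Lemma~\ref{lem:dominance} alone. That description shows neither that the attacker survives en route (the lemma concerns only reaching the endpoint) nor that the defenders can keep a deviating attacker inside $\mathcal{R}_A$. Your two one-sided guarantees supply exactly these. Convexity puts $[\mathbf{x}_A,\mathbf{p}^*)$ in the interior of every disk, so the straight run is safe until $\mathbf{p}^*$. Nestedness of each individual disk makes $\mathcal{R}_A(t)$ shrink, so $y_A$ never falls below $p^*_y$. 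A by-product worth stating is that the defenders need no coordination: each guarding its own disk suffices, so the cooperation obstacle you anticipated does not arise. Your clamp at $0$ also correctly repairs the displayed formula, which omits it.

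The one step to fix is the pursuit law, because the laws you treat as interchangeable are not. Nestedness holds for Petrosjan's parallel-approach strategy: $D_i$ keeps the line of sight $\mathbf{e}=(\mathbf{x}_A-\mathbf{x}_{D_i})/\|\mathbf{x}_A-\mathbf{x}_{D_i}\|$ constant by playing $\mathbf{w}+\kappa\mathbf{e}$. Here $\mathbf{w}$ is the attacker's velocity and $\kappa=\sqrt{(\mathbf{w}\cdot\mathbf{e})^2+v_{D_i}^2-\|\mathbf{w}\|^2}-\mathbf{w}\cdot\mathbf{e}\ge v_{D_i}-v_A$, which also gives your finite-time capture. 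Against a full-speed attacker this is heading to the point where the attacker's heading ray exits $\mathcal{A}_i$. Your support-function check is then short: $\dot{\mathbf{c}}_i=\mathbf{w}-\kappa\mathbf{e}/(\nu_i^2-1)$ and $\dot r_i=-\nu_i\kappa/(\nu_i^2-1)$. Using $\kappa^2+2\kappa\,\mathbf{w}\cdot\mathbf{e}=v_{D_i}^2-\|\mathbf{w}\|^2$, the condition $\|\dot{\mathbf{c}}_i\|+\dot r_i\le 0$ reduces to $\|\mathbf{w}\|\le v_A$.

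Classical pure pursuit (aiming at $\mathbf{x}_A$), by contrast, does not keep the disk nested. The Theorem~\ref{thm:nash} law (steer to the lowest point of one's own disk) actually fails in the multi-defender setting. Take $\nu=2$, $\mathbf{x}_A=(0,h)$, $\mathbf{x}_{D_{1,2}}=(\mp3,h)$ with $h>2$. The disks have centers $(\pm1,h)$ and radius $2$, and $\mathbf{p}^*=(0,h-\sqrt3)$ is a corner of the intersection. If the defenders steer to $(\pm1,h-2)$ while $A$ runs at the corner, the corner's height changes at rate $\bigl((2/\sqrt5-4)/3+4/\sqrt{15}\bigr)v_A\approx-0.0024\,v_A<0$.

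Finally, the parallel-approach law uses the attacker's current velocity, whereas the paper's post-sensing policies $\gamma_{D_i}:\mathbb{R}^{2(n_d+1)}\to\mathcal{U}$ are pure state feedback. Strictly, then, you obtain the value for defenders allowed nonanticipative reactions to the attacker's control. Also, the nesting theorem is usually credited to Petrosjan rather than Isaacs.
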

The computation of \eqref{eq:nash_payoff_multi} requires solving a constrained optimization problem:
\begin{align}
    \min_{p_x, p_y} \quad & p_y \\
    \text{s.t.} \quad & \|\mathbf{p} - \mathbf{c}_i\|^2 \leq r_i^2, \quad \forall i \in \mathcal{I}_c,
\end{align}
where $\mathbf{c}_i$ and $r_i$ are the center and radius of the $i$-th Apollonius Circle.

Figure~\ref{fig:contour_placement} demonstrates how the Nash equilibrium payoff varies with defender positioning, illustrating the importance of spatial configuration quality during the search phase.
\begin{rem}[Computational Complexity]
While the single-defender case admits a closed-form solution, the multi-defender case requires numerical optimization. However, the problem remains convex (minimizing a linear objective over an intersection of convex sets), ensuring efficient computation via standard convex optimization techniques.
\end{rem}

\begin{figure}[t]
    \centering
    \includegraphics[width=0.45\textwidth]{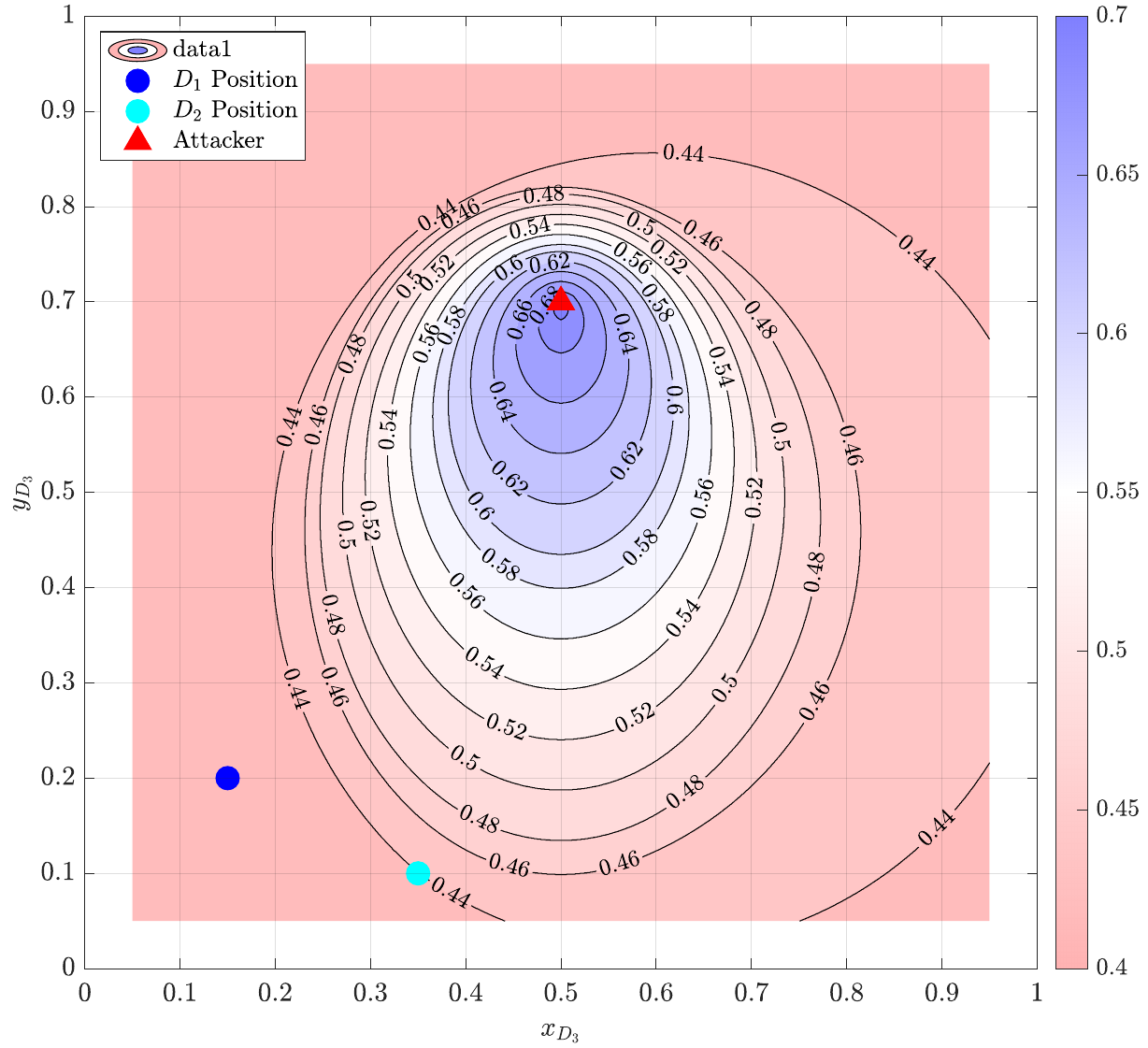}
    \caption{Value function level-set for optimal defender placement. The contours illustrate the Nash equilibrium payoff ($J^*$) as a function of a third defender's position, given two fixed defenders. Blue regions indicate configurations that yield a better payoff for the attacker, while red regions are more favorable for the defenders.}
    \label{fig:contour_placement}
\end{figure}

\subsection{GT-Assisted Reward Mechanism}
\noindent
The analytical solution for Phase II enables a training approach that eliminates the need to learn pursuit dynamics while maintaining theoretical optimality. Crucially, RL remains essential for learning the Phase~I search strategies, for which no analytical solution exists. 
% The GT-assisted mechanism only replaces the Phase~II simulation by providing analytically computed terminal rewards at sensing time. 
We formalize this mechanism and establish its relationship to full trajectory simulation.

\subsubsection{Early Termination Principle}
\noindent
Traditional reinforcement learning for target defense requires agents to learn both search and pursuit behaviors through extensive trial and error. Our GT-assisted approach leverages the analytical solution from Section~\ref{subsec:apollonius} to provide immediate, theoretically-grounded rewards at sensing time.

\begin{definition}[GT-Assisted Termination]
Upon sensing at time $t_s$ with configuration $\mathbf{x}(t_s)$, the GT-assisted mechanism:
\begin{enumerate}
    \item Computes the Nash equilibrium payoff $J^*(\mathbf{x}(t_s))$ using equation \eqref{eq:nash_payoff_multi}
    \item Terminates the episode immediately
    \item Assigns terminal reward $R = J^*(\mathbf{x}(t_s))$ to all defenders
\end{enumerate}
\end{definition}

This approach contrasts with standard RL training requiring agents to learn pursuit strategies through exploration.
% Algorithm~\ref{alg:gt_assist} describes the implementation of the GT-assisted early termination of the game episodes.

% \subsubsection{Theoretical Equivalence}
% \noindent
% The GT-assisted rewards provide the same learning signal as optimal play in the full game. Specifically, let $R^{\text{GT}}$ denote the reward from GT-assisted termination and $R^{\text{full}}$ denote the reward from simulating optimal pursuit to completion. Then:
% \begin{equation}
% R^{\text{GT}}(\mathbf{x}(t_s)) = \mathbb{E}[R^{\text{full}}(\mathbf{x}(t_s))|\text{optimal play}].
% \label{eq:reward_equivalence}
% \end{equation}

% % \begin{proof}
% By Theorem~\ref{thm:nash}, the Nash equilibrium payoff $J^*(\mathbf{x}(t_s))$ represents the equilibrium payoff under optimal play from both sides. The GT-assisted mechanism directly assigns this value as the reward:
% \begin{align}\label{eq:reward_equiv}
%     R^{\text{GT}}(\mathbf{x}(t_s)) = J^*(\mathbf{x}(t_s)).
% \end{align}

% Under optimal play in the full game, the terminal state yields:
% \begin{align}
%     R^{\text{full}}(\mathbf{x}(t_s)) = y_A(t_f^*),
% \end{align}
% where $t_f^*$ is the optimal termination time. By definition of the Nash equilibrium:
% \begin{align}
%     y_A(t_f^*) = J^*(\mathbf{x}(t_s)).
% \end{align}

% Therefore, $R^{\text{GT}} = R^{\text{full}}$ under optimal play. For suboptimal play, the GT-assisted reward provides an upper bound on achievable performance, preventing reward exploitation through poor pursuit execution.
% % \end{proof}

% \subsubsection{Theoretical Equivalence}
\subsubsection{Undiscounted Payoff Equivalence}
\noindent
Let $R^{\text{GT}}$ and $R^{\text{full}}$ denote the rewards from GT-assisted termination and full optimal pursuit simulation, respectively. Then:
\begin{equation}
R^{\text{GT}}(\mathbf{x}(t_s)) = \mathbb{E}[R^{\text{full}}(\mathbf{x}(t_s))|\text{optimal play}].
\label{eq:reward_equivalence}
\end{equation}
By Theorem~\ref{thm:nash}, the GT-assisted mechanism assigns:
\begin{align}\label{eq:reward_equiv}
    R^{\text{GT}}(\mathbf{x}(t_s)) = J^*(\mathbf{x}(t_s)),
\end{align}
which equals $y_A(t_f^*)$ under optimal play by both sides, where $t_f^*$ is the optimal termination time. Note that while the undiscounted terminal payoffs are equivalent, the discounted returns generally differ because the GT-assisted reward is assigned at $t_s < T_f$. Thus, under discounting, the GT-assisted formulation induces a modified learning objective that places additional value on earlier sensing, which is aligned with our goal of encouraging efficient search behavior. For suboptimal play, the GT-assisted reward provides an upper bound, preventing reward exploitation through poor pursuit execution.

\subsubsection{Computational and Learning Advantages}
\noindent
The GT-assisted mechanism provides several key advantages:

\begin{prop}[Computational Efficiency] The GT-assisted approach reduces the computational cost of each episode by eliminating the simulation of the pursuit phase. The number of saved steps per episode, $T_p$, can be estimated by considering a direct, one-dimensional pursuit:
\begin{equation}
    T_p \approx \frac{(\rho_s - \rho_c) / (v_D - v_A)}{\Delta t}
\end{equation}
where the numerator represents the approximate time required to close the distance from sensing to capture at the maximum relative speed, and the denominator converts this time into discrete simulation steps.
\end{prop}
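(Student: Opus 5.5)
The plan is to model the post-sensing interval as a one-dimensional closing problem along the defender--attacker line of sight, and bound the time until the separation shrinks from the sensing radius to the capture radius. At $t_s$, by the definition of $t_s$ in \eqref{eq:sensing_time} and continuity of trajectories, the sensing defender satisfies $\|\mathbf{x}_A(t_s)-\mathbf{x}_{D_i}(t_s)\| = \rho_s$. Capture occurs at the first $t$ with $\|\mathbf{x}_A-\mathbf{x}_{D_i}\|\le\rho_c$, which places the state in $\mathcal{S}_D$. Let $d(t)=\|\mathbf{x}_A(t)-\mathbf{x}_{D_i}(t)\|$. From the single-integrator dynamics \eqref{eq:kinematics},
\[
\dot d = \frac{(\mathbf{x}_A-\mathbf{x}_{D_i})^\top}{d}\,(v_A\mathbf{u}_A - v_{D_i}\mathbf{u}_{D_i}) \ge -(v_{D}+v_A),
\]
and if the defender uses pure pursuit, $\mathbf{u}_{D_i}=(\mathbf{x}_A-\mathbf{x}_{D_i})/d$, then $\dot d \le -(v_D - v_A)$ because $\|\mathbf{u}_A\|\le1$.

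Integrating this upper bound gives a capture time of at most $(\rho_s-\rho_c)/(v_D-v_A)$. The worst case for the defender is an attacker fleeing radially, which corresponds to the stated ``direct, one-dimensional pursuit''; in this case the bound is attained with equality. In the Phase~II equilibrium of Theorem~\ref{thm:nash}, both agents head to $\mathbf{p}^*$, and the closing rate along the line of sight lies between $v_D - v_A$ and $v_D+v_A$. The elapsed time is therefore of the same order, with $(\rho_s-\rho_c)/(v_D-v_A)$ as an upper estimate. Dividing the continuous-time duration by the simulation step $\Delta t$ converts it to a number of discrete steps. This gives $T_p$, up to rounding and the one-step discretization error at the boundaries of the interval.

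Finally, I would argue that these steps are exactly the ones skipped by the GT-assisted termination. By its definition, the episode ends at $t_s$ and receives reward $J^*(\mathbf{x}(t_s))$. The full simulation instead continues until $t_f$, and $t_f - t_s$ is the pursuit duration just estimated. Every other cost per episode, namely Phase~I stepping and the one convex solve for \eqref{eq:nash_payoff_multi}, is either shared by both approaches or $O(1)$.

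The main obstacle is that the proposition is only an approximation, so the honest content is the pair of bounds rather than an equality. The true equilibrium trajectories are not radial, and under Remark on nonzero capture radius capture can occur before $\mathbf{p}^*$. Also, $\rho_s,\rho_c,v_D$ are defender-dependent in the heterogeneous case. I would handle this by stating the result as the pure-pursuit upper bound above, which is exact in the radial-escape case, and noting that with several defenders the relevant capturer's parameters should be substituted.
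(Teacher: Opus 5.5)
Your proposal is correct. The paper gives no argument beyond the heuristic written into the proposition itself: close the gap $\rho_s-\rho_c$ at the tail-chase rate $v_D-v_A$, then divide by $\Delta t$. So you share its core idea, but you strengthen it. Computing $\dot d$ from the single-integrator dynamics turns the estimate into an actual statement. Under pure pursuit the post-sensing duration is at most $(\rho_s-\rho_c)/(v_D-v_A)$, with equality for radial escape. Under the equilibrium of Theorem~\ref{thm:nash} it is again at most this value, and at least $(\rho_s-\rho_c)/(v_D+v_A)$ when the episode ends in capture rather than breach. The paper's version buys brevity. Yours buys the correct reading of the formula as a worst-case (upper) estimate, because $v_D-v_A$ is the \emph{smallest} closing rate in these plays rather than a maximal one.

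The equilibrium claim you state without proof does hold. Both agents move straight to $\mathbf{p}^*$ and arrive at the same time $T$, so $\mathbf{x}_A(t_s+\tau)-\mathbf{x}_D(t_s+\tau)=(1-\tau/T)\bigl(\mathbf{x}_A(t_s)-\mathbf{x}_D(t_s)\bigr)$. Hence the line of sight is fixed, and $d$ shrinks at the constant rate $\|v_D\mathbf{u}_D-v_A\mathbf{u}_A\|\in[v_D-v_A,\,v_D+v_A]$.

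Two caveats apply to your accounting step. First, in the end-to-end baseline the pursuit is carried out by the learned defender policies. The steps actually saved therefore match your estimate only insofar as that pursuit resembles pure pursuit or equilibrium play; a weak learned pursuer takes longer, and the formula then understates the savings. Second, in your multi-defender remark, the quantity to substitute for a capture-only pursuer is its distance to the attacker at $t_s$, not a sensing radius.
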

This shorter horizon also improves sample efficiency by removing the need to explore pursuit strategies.
% \begin{prop}[Sample Efficiency]
% By eliminating pursuit phase exploration, the GT-assisted mechanism reduces sample complexity from $\mathcal{O}(|\mathcal{S}_I| \cdot |\mathcal{S}_{II}|)$ to $\mathcal{O}(|\mathcal{S}_I|)$, where $\mathcal{S}_I$ and $\mathcal{S}_{II}$ are the state spaces of Phases I and II respectively.
% \end{prop}

% \subsubsection{Implementation Algorithm}
% Algorithm~\ref{alg:gt_assist} describes the implementation of the GT-assisted early termination in second phase of the game.
\begin{algorithm}[tp]
\caption{GT-Assisted Episode Termination}
\label{alg:gt_assist}
\begin{algorithmic}[1]
\State \textbf{Input:} Sensing configuration $\mathbf{x}(t_s)$, speed ratios $\{\nu_i\}$
\State \textbf{Output:} Terminal reward $R$

\State // Extract positions
\State $\mathbf{x}_A \gets$ AttackerPosition($\mathbf{x}(t_s)$)
\State $\{\mathbf{x}_{D_i}\} \gets$ DefenderPositions($\mathbf{x}(t_s)$)

\State // Compute Apollonius Circles
\For{each defender $i$ with capture capability}
    \State $\mathbf{c}_i \gets (\nu_i^2 \mathbf{x}_A - \mathbf{x}_{D_i})/(\nu_i^2 - 1)$
    \State $r_i \gets \nu_i \|\mathbf{x}_A - \mathbf{x}_{D_i}\|/|\nu_i^2 - 1|$
\EndFor

\State // Solve for Nash equilibrium
\If{single defender}
    \State $J^* \gets \max\{0, c_y - r\}$
\Else
    \State $J^* \gets$ SolveConvexOptimization($\{\mathbf{c}_i, r_i\}$)
\EndIf

\State \Return $R = J^*$
\end{algorithmic}
\end{algorithm}

\subsubsection{Impact on Learned Behaviors}
\noindent
The GT-assisted mechanism shapes the learned search strategies through the objective defined in \eqref{eq:gamma^I_D*}: the policy must learn to position defenders not just for sensing, but for configurations that enable effective pursuit—despite never experiencing pursuit during training.

% \subsubsection{Impact on Learned Behaviors}
% \noindent
% The GT-assisted mechanism fundamentally shapes the learned search strategies.
% 
% \begin{theorem}[Implicit Configuration Learning]
% Under GT-assisted training, the optimal Phase I policy $\gamma_D^*$ maximizes:
% \begin{align}\label{eq:implicit_objective}
%     \gamma_D^* = \arg\max_{\gamma_D} \mathbb{E}
%     \left[\mathbb{I}_{t_s < t_f} \cdot J^*(\mathbf{x}(t_s))\right],
% \end{align}
% where $\mathbb{I}_{t_s < t_f}$ is as defined in Section~III-A. This optimization implicitly encodes both sensing probability and configuration quality objectives.

% Therefore, maximizing $\mathcal{J}(\gamma_D)$ implicitly balances: (i) maximizing sensing probability $\mathbb{P}[t_s < \infty]$, and (ii) achieving favorable configurations that maximize $J^*(\mathbf{x}(t_s))$ when sensing occurs. The policy must learn to position defenders not just for sensing, but for configurations that enable effective pursuit—despite never experiencing pursuit during training.
% \end{rem}

\section{MULTI-AGENT REINFORCEMENT LEARNING APPROACH}

% \subsection{Decentralized Partially Observable MDP Formulation}
% \noindent
% % To learn effective search strategies for Phase I while leveraging analytical solutions for Phase II, w
% In this section, we formulate the target defense problem as a Decentralized Partially Observable Markov Decision Process (Dec-POMDP)

% % The Dec-POMDP
% defined by the tuple:
\subsection{Decentralized Partially Observable MDP Formulation}
\noindent
We formulate the problem as a Decentralized Partially Observable Markov Decision Process (Dec-POMDP) defined by the tuple:
\begin{equation}
\langle \mathcal{I}, \mathcal{S}, \{\mathcal{A}_i\}_{i \in \mathcal{I}}, \{\mathcal{O}_i\}_{i \in \mathcal{I}}, T, \{R_i\}_{i \in \mathcal{I}}, \gamma \rangle,
\end{equation}
where:
\begin{itemize}
\item $\mathcal{I} = \{1, 2, \ldots, n_d\}$: defender index set;
    % \item $\mathcal{S} = \Omega^{n_d+1} \subset \mathbb{R}^{2(n_d+1)}$: global state space with $\mathbf{s}_t = [\mathbf{x}_{D_1}^\top(t), \ldots, \mathbf{x}_{D_{n_d}}^\top(t), \mathbf{x}_A^\top(t)]^\top$;
    \item $\mathcal{S} = \Omega^{n_d+1} \subset \mathbb{R}^{2(n_d+1)}$: global state space with $\mathbf{s}_t \equiv \mathbf{x}(t)$;
    \item $\mathcal{A}_i \subset \mathbb{R}$: action space for defender $i$;
    \item $\mathcal{O}_i$: phase-dependent observation space as defined in Section II;
    \item $T: \mathcal{S} \times \mathcal{A}_1 \times \cdots \times \mathcal{A}_{n_d} \to \Delta(\mathcal{S})$: transition function governed by \eqref{eq:kinematics} and \eqref{eq:attacker_policy};
    \item $R_i: \mathcal{S} \to \mathbb{R}$: reward function for defender $i$;
    \item $\gamma \in [0,1)$: discount factor.
\end{itemize}

% Each defender $i$ learns a parameterized policy $\pi_{\theta_i}: \mathcal{O}_i \to \Delta(\mathcal{A}_i)$ that maps observations to probability distributions over actions. These learned policies correspond to the defender strategies $\gamma_{D_i}$ from the game-theoretic formulation in Section II. The team's objective is to learn a joint policy $\boldsymbol{\pi}_{\boldsymbol{\theta}} = \{\pi_{\theta_1}, \ldots, \pi_{\theta_{n_d}}\}$ that maximizes the expected cumulative discounted reward:
Each defender $i$ learns a policy $\pi_{\theta_i}: \mathcal{O}_i \to \Delta(\mathcal{A}_i)$, corresponding to the strategy $\gamma_{D_i}$ from Section II. The team objective is to learn a joint policy $\boldsymbol{\pi}_{\boldsymbol{\theta}} = \{\pi_{\theta_1}, \ldots, \pi_{\theta_{n_d}}\}$ maximizing the expected cumulative discounted reward:
\begin{equation}
J(\boldsymbol{\theta}) = \mathbb{E}_{\boldsymbol{\pi}_{\boldsymbol{\theta}}} \left[ \sum_{t=0}^{T_f} \gamma^t R_t \mid \mathbf{s}_0 \sim \rho_0 \right],
\label{eq:rl_objective}
\end{equation}
% where $T_f$ is the episode termination time, $R_t$ is the shared team reward at time $t$, and $\rho_0$ represents the initial state distribution defined by \eqref{eq:attacker_init}--\eqref{eq:defender_init}.
where $T_f$ is the episode termination time, $R_t$ is the shared team reward at time $t$, and $\rho_0$ represents the initial state distribution defined by \eqref{eq:attacker_init}--\eqref{eq:defender_init}. We note that the learned policies are not guaranteed to converge to a global optimum, nor are they directly interpretable; however, the GT-assisted reward mechanism ensures that the pursuit phase outcome is always evaluated optimally.

\subsection{Observation and Action Spaces}
\noindent
\subsubsection{Action Space}
\noindent
Each defender $i$ controls its heading through a continuous action:
\begin{equation}
a_t^i \in \mathcal{A}_i = [-1, 1] \subset \mathbb{R},
\label{eq:action_space}
\end{equation}
% which represents a normalized heading angle. This action maps to the control input $\mathbf{u}_{D_i}(t)$ in \eqref{eq:kinematics} via:
mapping to the control input $\mathbf{u}_{D_i}(t)$ in \eqref{eq:kinematics} via:
\begin{equation}
\mathbf{u}_{D_i}(t) = \begin{bmatrix} \cos(a_t^i \cdot \pi) \\ \sin(a_t^i \cdot \pi) \end{bmatrix},
\label{eq:action_to_control}
\end{equation}
ensuring $\|\mathbf{u}_{D_i}(t)\| = 1$ as required by the dynamics.

% \subsubsection{Phase-Dependent Observations}
% \noindent
% As established in Section II, the game exhibits information asymmetry that transitions at the sensing time $t_s$ defined in \eqref{eq:sensing_time}. This structure induces phase-dependent observation spaces for each defender.

% \subsubsection{Phase-Dependent Observations}
% \noindent
% \textbf{Phase I (Search Phase, $t < t_s$):} Before sensing, defender $i$ observes its own position and those of teammate defenders:
\subsubsection{Phase-Dependent Observations}
\noindent
The information asymmetry at sensing time $t_s$ induces phase-dependent observation spaces for each defender.

\textbf{Phase I (Search Phase, $t < t_s$):} Before sensing, defender $i$ observes its own position and those of teammate defenders:
\begin{equation}
\mathbf{o}_t^i \in \mathcal{O}_i^{\text{I}} 
% = \Omega^{n_d} 
\subset \mathbb{R}^{2n_d},
\label{eq:obs_phase1}
\end{equation}
where
\begin{equation}
\mathbf{o}_t^i = [\mathbf{x}_{D_1}^\top(t), \ldots, \mathbf{x}_{D_{n_d}}^\top(t)]^\top.
\label{eq:obs_phase1_form}
\end{equation}
The attacker position $\mathbf{x}_A(t)$ is not observable during this phase. Defenders share their positions via communication, so the partial observability is solely due to the unknown attacker state.

\textbf{Phase II (Pursuit Phase, $t \geq t_s$):} Upon sensing, the attacker's position becomes observable to all defenders:
\begin{equation}
\mathbf{o}_t^i \in \mathcal{O}_i^{\text{II}} = \Omega^{n_d+1} \subset \mathbb{R}^{2(n_d+1)},
\label{eq:obs_phase2}
\end{equation}
where
\begin{equation}
\mathbf{o}_t^i = [\mathbf{x}_{D_1}^\top(t), \ldots, \mathbf{x}_{D_{n_d}}^\top(t), \mathbf{x}_A^\top(t)]^\top = \mathbf{s}_t.
\label{eq:obs_phase2_form}
\end{equation}
% In this phase, observations coincide with the global state, and the game becomes fully observable to the defender team. Note that pursuers with $\rho_s^{(i)} = 0$ (capture-only role) also gain access to the attacker's position once any sensor detects it.
% The observation function for defender $i$ is therefore:
% Pursuers with $\rho_s^{(i)} = 0$ also gain access once any sensor detects the attacker. 
Pursuers with $\rho_s^{(i)} = 0$ also gain access to the full state once any sensor detects the attacker.
% The observation function for defender $i$ is:
% \begin{equation}
% \Omega_i(\mathbf{s}_t) = \begin{cases}
% [\mathbf{x}_{D_1}^\top(t), \ldots, \mathbf{x}_{D_{n_d}}^\top(t)]^\top & \text{if } t < t_s, \\
% \mathbf{s}_t & \text{if } t \geq t_s.
% \end{cases}
% \label{eq:observation_function}
% \end{equation}

\subsection{Reward Structure and Terminal Conditions}
\noindent
We evaluate two reward structures that differ in their terminal conditions and reward assignment mechanisms, enabling a comparative analysis of GT-assisted early termination versus standard end-to-end learning.

\subsubsection{GT-Assisted Reward}
\noindent
In the GT-assisted approach, episodes terminate immediately upon sensing at time $t_s$, and the terminal reward is computed using the Nash equilibrium payoff from \eqref{eq:nash_payoff_multi}:
\begin{equation}
R_t^i = \begin{cases}
0 & \text{if } t < t_s, \\
J^*(\mathbf{x}(t_s)) & \text{if } t = t_s,
\end{cases}
\quad \forall i \in \mathcal{I},
\label{eq:reward_gt}
\end{equation}
where $J^*(\mathbf{x}(t_s))$ is the analytically computed Nash equilibrium payoff given by \eqref{eq:nash_payoff_single} for a single defender or \eqref{eq:nash_payoff_multi} for multiple defenders. The terminal condition is:
\begin{equation}
\mathcal{S}_{\text{term}}^{\text{GT}} = \{\mathbf{s}_t \in \mathcal{S} \mid \exists i \in \mathcal{I} : \|\mathbf{x}_A(t) - \mathbf{x}_{D_i}(t)\| \leq \rho_s^{(i)}\}.
\label{eq:terminal_gt}
\end{equation}
From~\eqref{eq:reward_equiv}, this reward is equivalent to the expected outcome under optimal play in Phase II, eliminating the need for defenders to learn pursuit dynamics while providing theoretically grounded feedback for search strategy optimization.

\subsubsection{Standard Reward}
\noindent
In the standard approach, episodes continue through both search and pursuit phases until either capture or target breach occurs. The terminal reward is:
\begin{equation}
R_t^i = \begin{cases}
0 & \text{if } t < T_f, \\
\max\{0, y_A(T_f)\} & \text{if } t = T_f,
\end{cases}
\quad \forall i \in \mathcal{I},
\label{eq:reward_standard}
\end{equation}
where $T_f$ is the episode termination time and $y_A(T_f)$ is the attacker's final $y$-coordinate. The terminal condition is:
\begin{equation}
\begin{split}
\mathcal{S}_{\text{term}}^{\text{std}} = \{\mathbf{s}_t \in \mathcal{S} \mid y_A(t) \leq 0 \text{ or } \\
\quad \exists i \in \mathcal{I}_c : \|\mathbf{x}_A(t) - \mathbf{x}_{D_i}(t)\| \leq \rho_c^{(i)}\},
\label{eq:terminal_std}
\end{split}
\end{equation}
where $\mathcal{I}_c = \{i \in \mathcal{I} \mid \rho_c^{(i)} > 0\}$ is the set of defenders with capture capability.

\subsubsection{Shared Reward Structure}
\noindent
In both approaches, all defenders receive identical rewards ($R_t^i = R_t$ for all $i \in \mathcal{I}$) to encourage cooperative behavior and team coordination. This shared reward structure is critical for learning coordinated search patterns in Phase I. While individual reward schemes could reduce credit misattribution for a single agent's actions, they risk incentivizing greedy behaviors that undermine team coordination---particularly in the heterogeneous setting where the sensor's contribution is only realized through the pursuer's capture. The shared structure ensures all agents are aligned toward the common team objective.

\subsection{Multi-Agent Proximal Policy Optimization}
\noindent
% We employ Multi-Agent Proximal Policy Optimization (MAPPO) \cite{yu2022surprising} as implemented in the BenchMARL framework \cite{bettini2024benchmarl} for policy learning. MAPPO extends the single-agent PPO algorithm to multi-agent settings through centralized training with decentralized execution (CTDE).
We employ MAPPO \cite{yu2022surprising} via the BenchMARL framework \cite{bettini2024benchmarl}, which extends PPO to multi-agent settings through centralized training with decentralized execution (CTDE).

\subsubsection{Policy and Value Networks}
\noindent
Each defender $i$ maintains a policy network $\pi_{\theta_i}: \mathcal{O}_i \to \Delta(\mathcal{A}_i)$ outputting a Gaussian distribution:
\begin{equation}
a_t^i \sim \mathcal{N}(\mu_{\theta_i}(\mathbf{o}_t^i), \sigma_{\theta_i}(\mathbf{o}_t^i)),
\label{eq:policy_gaussian}
\end{equation}
% where $\mu_{\theta_i}$ and $\sigma_{\theta_i}$ are neural networks that produce the mean and standard deviation. Sampled actions are clamped to the admissible range $[-1, 1]$ to satisfy the action space constraints in \eqref{eq:action_space}.

% A centralized critic network $V_\phi: \mathcal{S} \to \mathbb{R}$ estimates the state value function using global state information during training:
where $\mu_{\theta_i}$ and $\sigma_{\theta_i}$ are the mean and standard deviation networks, with sampled actions clamped to $[-1, 1]$. A centralized critic $V_\phi: \mathcal{S} \to \mathbb{R}$ estimates the state value using global information during training:

\begin{equation}
V_\phi(\mathbf{s}_t) \approx \mathbb{E}\left[\sum_{k=t}^{T_f} \gamma^{k-t} R_k \mid \mathbf{s}_t\right],
\label{eq:value_function}
\end{equation}
where $R_k$ denotes the shared team reward at time $k$ (as all defenders receive identical rewards).

At execution time, each defender acts based only on local observations $\mathbf{o}_t^i$, enabling decentralized execution.

\subsubsection{Training Objective}
\noindent
% The policy for each defender is updated by maximizing the clipped surrogate objective:
Each policy is updated by maximizing the clipped surrogate objective:
\begin{equation}
L^{\text{CLIP}}(\theta_i) = \mathbb{E}_t\left[\min\left(r_t^i(\theta_i)\hat{A}_t^i, \text{clip}(r_t^i(\theta_i), 1-\epsilon, 1+\epsilon)\hat{A}_t^i\right)\right],
\label{eq:ppo_objective}
\end{equation}
% where $r_t^i(\theta_i) = \frac{\pi_{\theta_i}(a_t^i|\mathbf{o}_t^i)}{\pi_{\theta_i^{\text{old}}}(a_t^i|\mathbf{o}_t^i)}$ is the probability ratio, $\epsilon$ is the clip parameter, and $\hat{A}_t^i$ is the advantage estimate computed using Generalized Advantage Estimation (GAE) \cite{schulman2015high}:
where $r_t^i(\theta_i) = \pi_{\theta_i}(a_t^i|\mathbf{o}_t^i) / \pi_{\theta_i^{\text{old}}}(a_t^i|\mathbf{o}_t^i)$ is the probability ratio, $\epsilon$ is the clip parameter, and $\hat{A}_t^i$ is computed via GAE \cite{schulman2015high}:
\begin{equation}
\hat{A}_t^i = \sum_{l=0}^{\infty} (\gamma \lambda)^l \delta_{t+l},
\label{eq:gae}
\end{equation}
with $\delta_t = R_t + \gamma V_\phi(\mathbf{s}_{t+1}) - V_\phi(\mathbf{s}_t)$ being the temporal difference error.

\section{Experiments}
\noindent
% Experiments\footnote{Code and experimental data are available at \url{https://github.com/das-goutam/benchmarl-target-defense}} were conducted on CPU-based infrastructure using the BenchMARL \cite{bettini2024benchmarl} framework with VMAS \cite{bettini2022vmas} backend for vectorized simulation. We employed Multi-Agent PPO with 20 parallel environments, training for 1.2-1.8M frames (150-300 iterations) over 3 seeds for statistical significance.

% % \subsubsection{Environment Parameters}
% Table \ref{tab:env_params} summarizes the key environment parameters used in our experiments.

Experiments\footnote{Code and experimental data are available at \url{https://github.com/das-goutam/benchmarl-target-defense}} used BenchMARL \cite{bettini2024benchmarl} with VMAS \cite{bettini2022vmas} for vectorized simulation, employing MAPPO with 20 parallel environments, training for 1.2--1.8M frames over 3 seeds. Table~\ref{tab:env_params} summarizes the environment parameters.
% \daigo
{Homogeneous defender team was tested with the sensing radius  $\rho_s = 0.3$ for 1v1 (1 defender, 1 attacker) and $\rho_s = 0.15$ for 3v1 (3 defenders, 1 attacker) and the capture radius of $\rho_c = 0.07$.}
\begin{table}[t]
\centering
\caption{Environment Parameters}
\label{tab:env_params}
\begin{tabular}{ll}
\hline
\textbf{Parameter} & \textbf{Value} \\
\hline
Domain & $\Omega = [0,1] \times [0,1]$ \\
Target boundary & $\mathcal{T} = \{(x,y) \in \Omega \mid y = 0\}$ \\
Attacker spawn region & $\Omega_A = [0,1] \times [0.9, 1.0]$ \\
% Defender spawn region & $\Omega_D = [0,1] \times [0, 0.1]$ \\
Speed ratio & $\nu = v_{D_i}/v_A = 3.33$ \\
Sensing radius & $\rho_s \in \{0.15, 0.3, 0.35\}$  \\
Capture radius & $\rho_c = 0.07$ \\
Time step & $\Delta t = 0.05$ \\
Max episode length & $T_{\max} = 2/(v_A \Delta t)$  steps \\
\hline
\end{tabular}
\end{table}

% We evaluate two team compositions:

% \textbf{Homogeneous Teams:} 

% All defenders possess both sensing ($\rho_s = 0.3$ for 1v1 (1 defender, 1 attacker) and $\rho_s = 0.15$ for 3v1 (3 defenders, 1 attacker) configuration) and capture ($\rho_c = 0.07$) capabilities. 
% We test 1v1 and 3v1 configurations.

% \textbf{Heterogeneous Teams:} 2v1 configuration with sensor ($\rho_s^{(1)} = 0.35, \rho_c^{(1)} = 0$) and pursuer ($\rho_s^{(2)} = 0, \rho_c^{(2)} = 0.07$), requiring explicit coordination.

\subsection{Team Performance}
\noindent
In scenarios with homogeneous defenders, the GT-assisted approach consistently outperforms the end-to-end baseline in both single-defender (1v1) and multi-defender (3v1) configurations (Figs.~\ref{fig:gt_vs_e2e_1v1} \& \ref{fig:gt_vs_e2e_3v1}, respectively). For the 1v1 case, the GT-assisted policy converges rapidly to a mean reward of $\mu=0.605$, a $10 \%$ improvement over the baseline's $\mu=0.545$. This advantage is maintained in the 3v1 setting, where the GT-assisted method achieves a final reward of $\mu=0.745$, representing a $18.9 \%$ increase over the baseline's $\mu=0.626$. While the end-to-end approach could in principle converge to similar performance given sufficient training, it must simultaneously learn both search and pursuit behaviors, resulting in a fundamentally larger exploration space that hinders convergence in practice.

\begin{figure}[bp]
    \centering
    \includegraphics[width=0.45\textwidth]{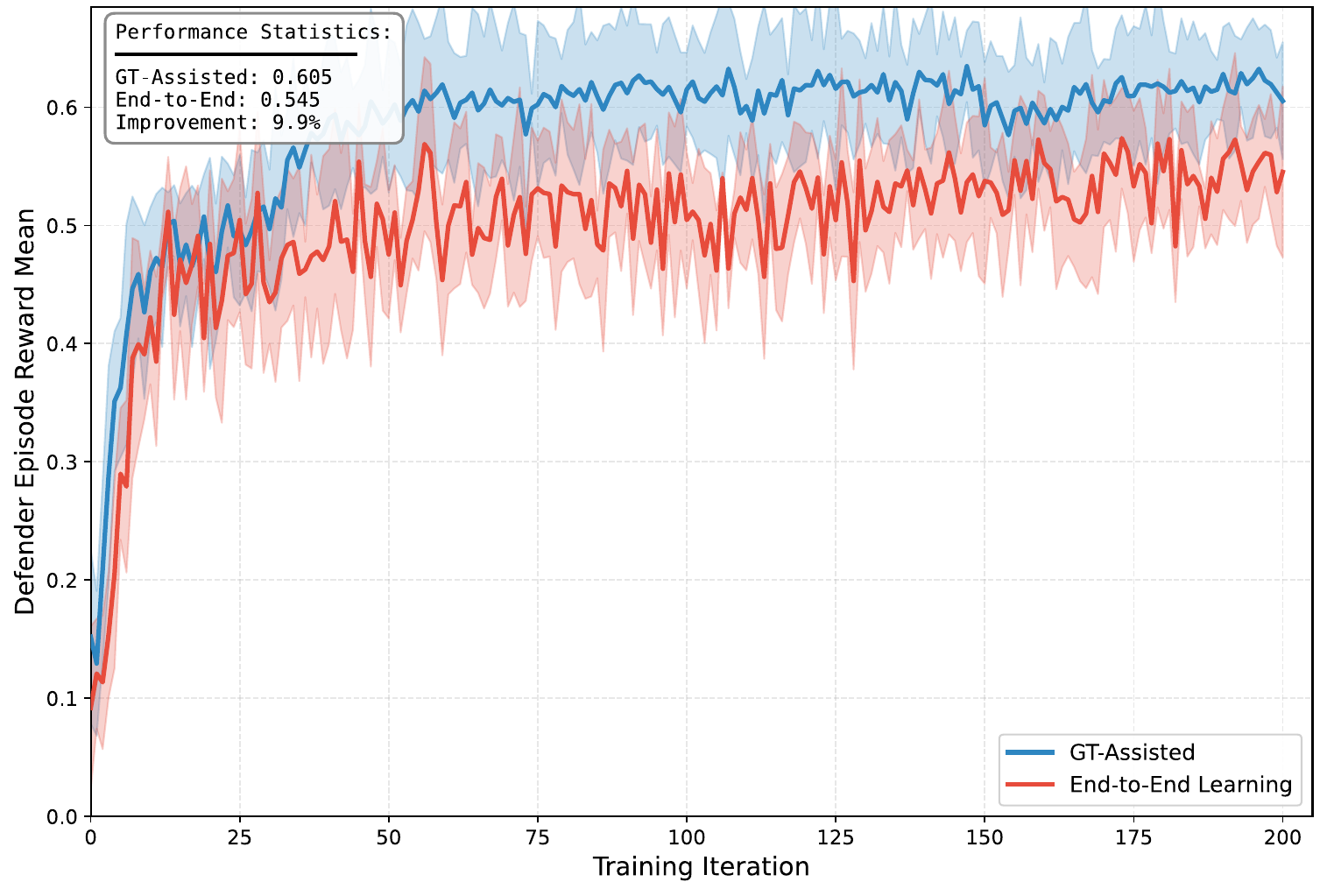}
    \caption{Training performance in the 1v1 homogeneous scenario. The GT-assisted policy (blue) converges faster and to a significantly higher mean reward ($\mu=0.605$) than the end-to-end baseline (red, $\mu=0.545$), demonstrating a 10\% performance improvement.}
    \label{fig:gt_vs_e2e_1v1}
\end{figure}
\begin{figure}[btp]
    \centering
    \includegraphics[width=0.45\textwidth]{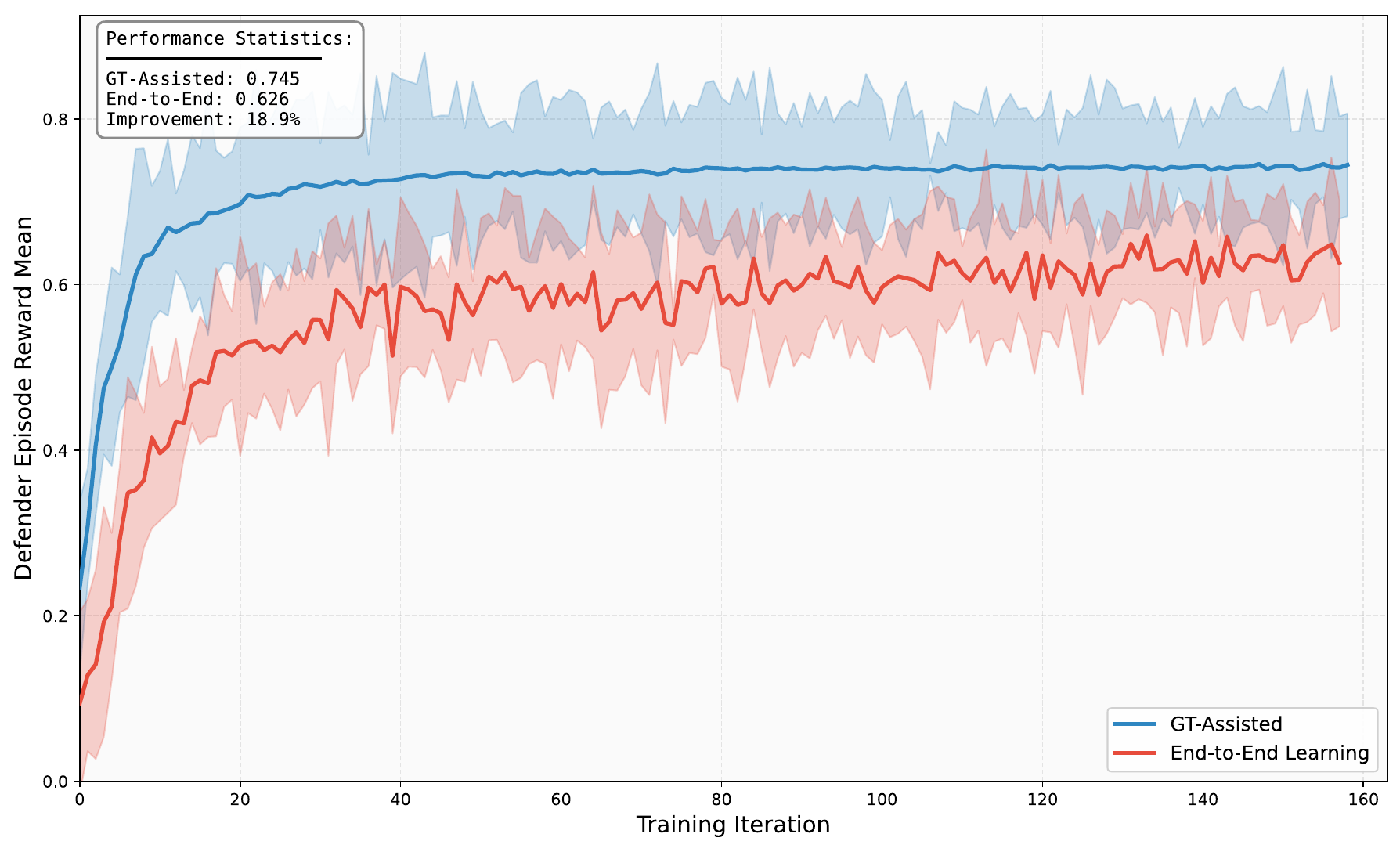}
    \caption{Training performance in the 3v1 homogeneous scenario. The GT-assisted approach (blue) again shows superior performance, achieving a stable mean reward of $\mu=0.745$ with lower variance. The end-to-end method (red) converges to a lower reward of $\mu=0.626$.}
    \label{fig:gt_vs_e2e_3v1}
\end{figure}

% \subsection{Heterogeneous Team Performance}
% \noindent
% We next evaluate a 2v1 heterogeneous scenario that requires explicit coordination among agents. The defender team is composed of a dedicated sensor agent ($\rho_s=0.35, \rho_c=0$) and a "blind" pursuer agent ($\rho_s=0, \rho_c=0.07$). 

% The experimental results demonstrate that the GT-assisted method is critical for learning this complex coordinated behavior. It achieves a stable converged reward approximately 10\% higher ( $\mu=0.407$ ) than the end-to-end baseline ( $\mu=0.367$ ). The standard learning approach struggles to bridge the credit assignment gap between the sensor's actions and the pursuer's success, leading to unstable training and significantly poorer performance.

\subsection{Spatial Configuration Quality Analysis}
\noindent
To evaluate whether GT-assisted training produces superior spatial configurations at sensing, we analyze the Nash equilibrium payoffs $J^*(\mathbf{x}(t_s))$ computed at sensing time for both approaches across 1,000 evaluation episodes in a homogeneous 3v1 environment with sensing radius $\rho_s = 0.15$ and speed ratio $\nu = 3.33$.

Figure~\ref{fig:spatial_eval} presents the comparison between the two scenarios. In particular, Figure~\ref{fig:spatial_eval}(c) reveals that GT-assisted training achieves higher Nash payoffs in successfully sensed episodes, with a median value of 0.768 versus 0.701 for end-to-end learning, which suggests better defender spatial configuration under GT-assisted training.

\section{Conclusion}
\noindent
This paper presents a hybrid framework that integrates game-theoretic analytical solutions with multi-agent reinforcement learning for a border defense game. By computing Nash equilibrium payoffs analytically at the moment of sensing, our approach eliminates the need to learn deterministic pursuit dynamics and concentrates computational resources on the challenging stochastic search problem. 

% Experimental evaluation across homogeneous and heterogeneous defender teams demonstrates that GT-assisted early termination achieves superior performance compared to standard end-to-end learning, with faster convergence rates, and better strategic spatial configurations at sensing time.
Experimental evaluation across homogeneous defender teams demonstrates that GT-assisted early termination achieves superior performance compared to standard end-to-end learning, with faster convergence rates and better strategic spatial configurations at sensing time.
Future work will extend this hybrid approach to scenarios with multiple simultaneous attackers, which would require handling partial sensing transitions as not all attackers may be detected at the same time, and investigate scalability to larger teams where the convex optimization for Apollonius Circle intersections may become more demanding. Incorporating additional game-theoretic insights into the search phase to further improve sample efficiency and coordination quality is also a promising direction.
\begin{figure}[tp]
    \centering
    \includegraphics[width=0.48\textwidth]{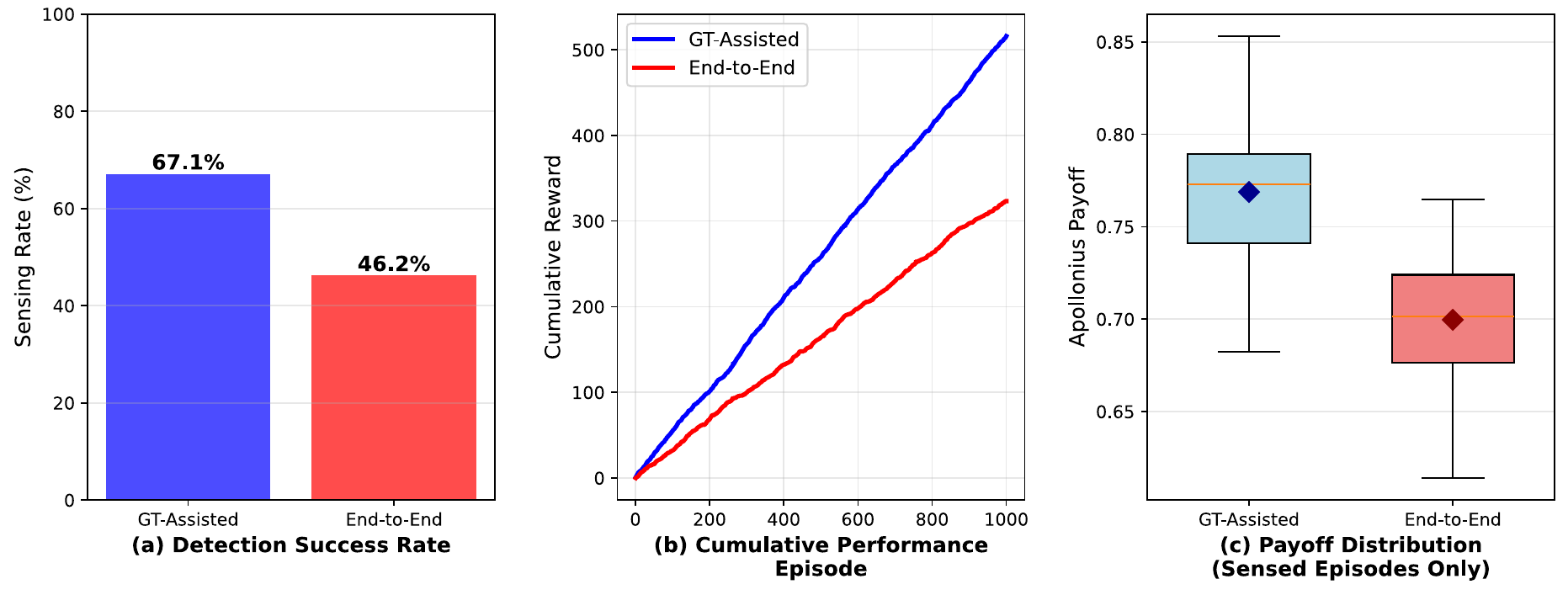}
    \caption{Evaluation of spatial configuration quality for the 3v1 homogeneous scenario ($\rho_s =0.15$) over 1,000 test episodes. (a) The GT-assisted policy achieves a much higher sensing rate (67.1\%) compared to the baseline (45.2\%). (b) This leads to superior cumulative performance. (c) For episodes where sensing occurred, the GT-assisted policy achieves a distribution of Apollonius payoffs with a significantly higher median ($0.768$ vs. $0.701$), indicating that it learns to position defenders more strategically at the moment of sensing.}
    \label{fig:spatial_eval}
\end{figure}
\typeout{} 
%===========================
\bibliographystyle{IEEEtran}
\bibliography{references}
\end{document}